\title{\Large \bf  Synchronized Multi-Arm Rearrangement\\ Guided by Mode Graphs with Capacity Constraints}
\author{Rahul Shome and Kostas E. Bekris}
\authorrunning{Shome and Bekris}
\institute{Rutgers University, New Brunswick, USA}
\newcommand{\Pspace}{\mathcal{P}}
\newcommand{\pose}{p}
\newenvironment{myitem}{\begin{list}{$\bullet$}
{\setlength{\itemsep}{-0pt}
\setlength{\topsep}{0pt}
\setlength{\labelwidth}{0pt}
\setlength{\leftmargin}{10pt}
\setlength{\parsep}{-0pt}
\setlength{\itemsep}{0pt}
\setlength{\partopsep}{0pt}}}{\end{list}}
\newtheorem{observation}{\bf Observation}
\newtheorem{condition}{\bf Condition}
\newcommand{\dof}{{\tt DoF}\xspace}
\newcommand{\tree}{\ensuremath{\mathbb{T} \ }}
\newcommand{\drrtstar}{\ensuremath{{\tt dRRT^*}}}
\newcommand{\cost}{\mathtt{C}}
\newtheorem{assumption}{Assumption}
\newcommand{\objectset}{\mathcal{O}}
\newcommand{\object}{o}
\newcommand{\workspace}{\mathcal{W}}
\newcommand{\taskspace}{\mathcal{T}}
\newcommand{\tfree}{\mathcal{T}_{\rm free}}
\newcommand{\arrangement}{A}
\newcommand{\arm}{m}
\newcommand{\arms}{\mathcal{M}}
\newcommand{\state}{q}
\newcommand{\ainit}{A_{\rm init}}
\newcommand{\atarget}{A_{\rm goal}}
\newcounter{model}
\definecolor{darkgreen}{RGB}{30,150,30}
\newcommand{\cspace}{\mathbb{C}}
\newcommand{\mode}{\text{\textbf{m}}}
\newcommand{\pregion}{\mathcal{S}}
\newcommand{\region}{{S}}
\newcommand{\modegraph}[2]{{\mathds{G}}(#1; #2)}
\newcommand{\modeg}{{\mathds{G}}}
\newcommand{\modenodes}{{\mathds{M}}}
\newcommand{\modeedges}{{\mathds{E}}}
\newcommand{\modes}{\text{\textbf{m}}}
\newcommand{\actions}{\text{\textbf{e}}}
\newcommand{\capacity}{{\mathds{F}}}
\newcommand{\weights}{{\mathds{W}}}
\newcommand{\State}{Q}
\newcommand{\mstate}{\mathds{Q}}
\newcommand{\heuristic}{\mathcal{H}}
\newcommand{\modeinv}{\modenodes^{-1}}
\renewcommand{\tree}{\mathds{T}}
\newcommand{\moma}{\ensuremath{\mathtt{SMAR}}\xspace}
\newcommand{\tamp}{\ensuremath{\mathtt{TAMP}}\xspace}
\newcommand{\mpp}{\ensuremath{\mathtt{MAPF}}\xspace}
\newcommand{\mgcc}{\momaalgo}
\newcommand{\ilp}{\ensuremath{\mathtt{ILP}}\xspace}
\newcommand{\modegoal}{\mathcal{G}\xspace}
\newcommand{\momaalgo}{\ensuremath{\mathtt{SMAR}\mathtt{T}}\xspace}
\newcommand{\rahul}[1]{{#1}} 
\begin{document}

\maketitle
\thispagestyle{empty}
\pagestyle{empty}

\begin{abstract}
Solving task planning problems involving multiple objects and multiple robotic arms poses scalability challenges. Such problems involve not only coordinating multiple high-DoF arms, but also searching through possible sequences of actions including object placements, and handoffs. The current work identifies a useful connection between multi-arm rearrangement and recent results in multi-body path planning on graphs with vertex capacity constraints. 
Solving a synchronized multi-arm rearrangement at a high-level involves reasoning over a modal graph, where nodes correspond to stable object placements and object transfer states by the arms. Edges of this graph correspond to pick, placement and handoff operations. The objects can be viewed as pebbles moving over this graph, which has capacity constraints. For instance, each arm can carry a single object but placement locations can accumulate many objects. 
Efficient integer linear programming-based solvers have been proposed for the corresponding pebble problem. The current work proposes a heuristic to guide the task planning process for synchronized multi-arm rearrangement. Results indicate good scalability to multiple arms and objects, and an algorithm that can find high-quality solutions fast and exhibiting desirable anytime behavior.
\end{abstract}

% A requirement is that the underlying problem is monotone for many arms, i.e., each object needs to be moved once by one of the arms to solve the problem. This is a wider set of problems than the single-arm monotone case. 
 
\section{Introduction}
\label{sec:introduction}
Robotic arms are deployed in a variety of applications that involve pick-and-place tasks ranging from manufacturing to logistics and recycling. With the increasing affordability of such systems, and the availability of platforms like dual-arm humanoids, it is important to study how multiple robotic arms can expand upon the capabilities of individual arms and achieve faster execution. In some traditional deployments of multiple arms, such as automotive manufacturing, the environment is perfectly known and each arm performs its own independent task in relatively well-separated workspaces. 

\begin{figure}[t]
    \centering
    \includegraphics[height=0.8in]{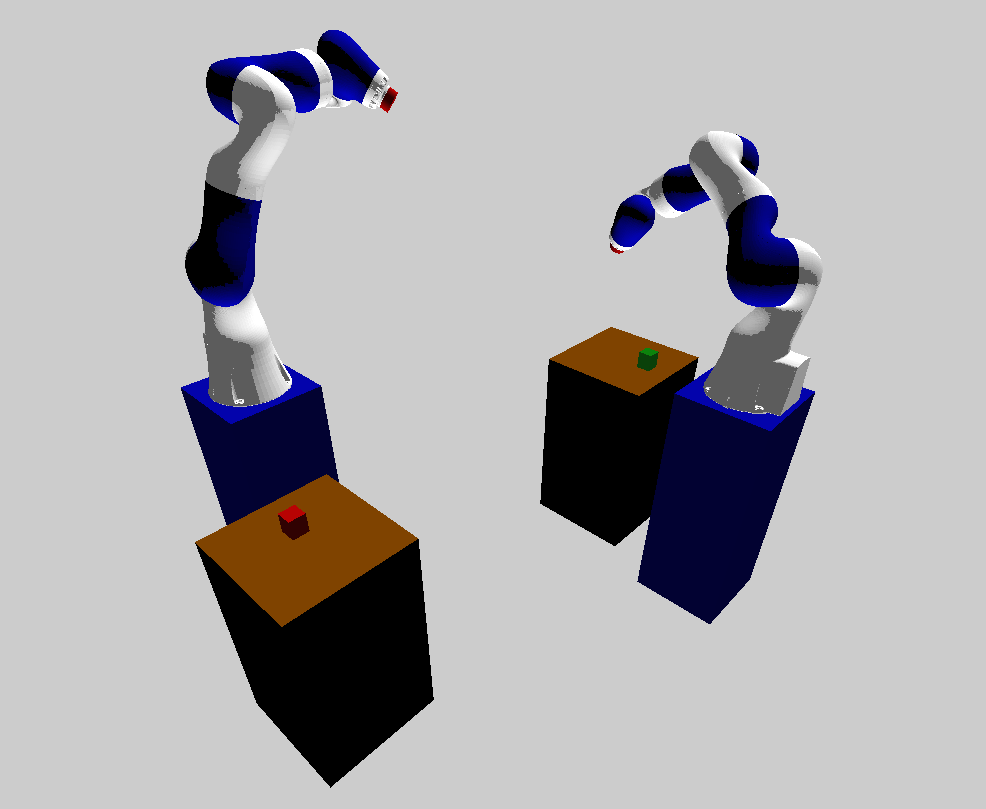}
    \includegraphics[height=0.8in]{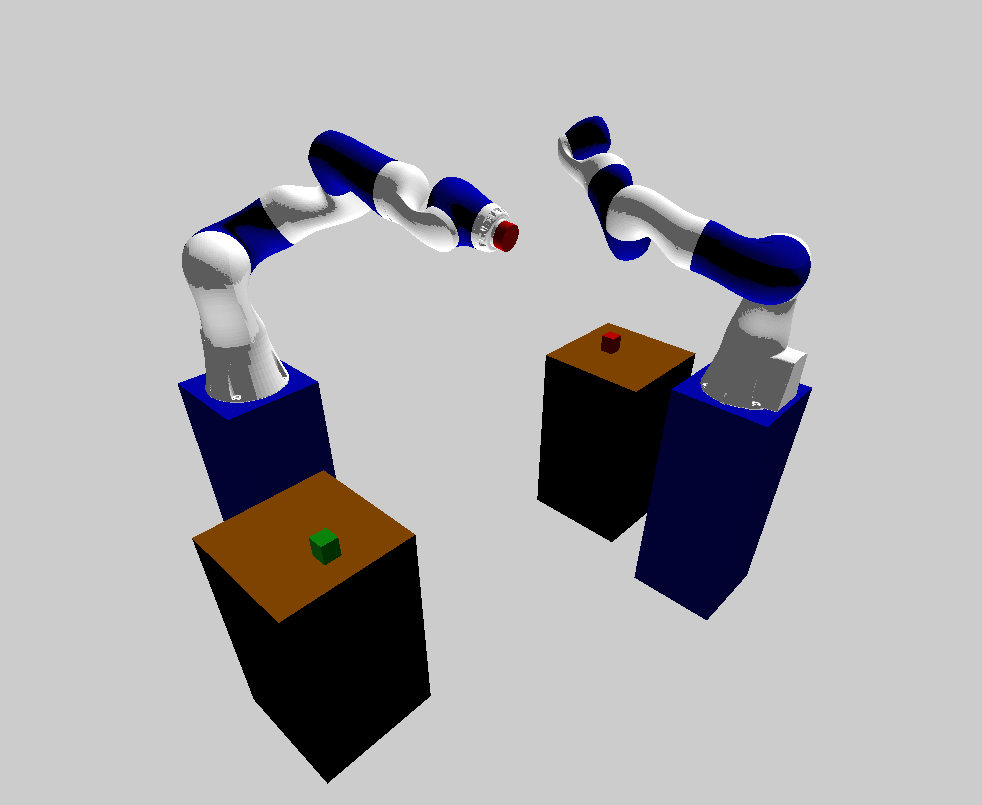}\hspace{0.03\textwidth}
    \includegraphics[height=0.8in]{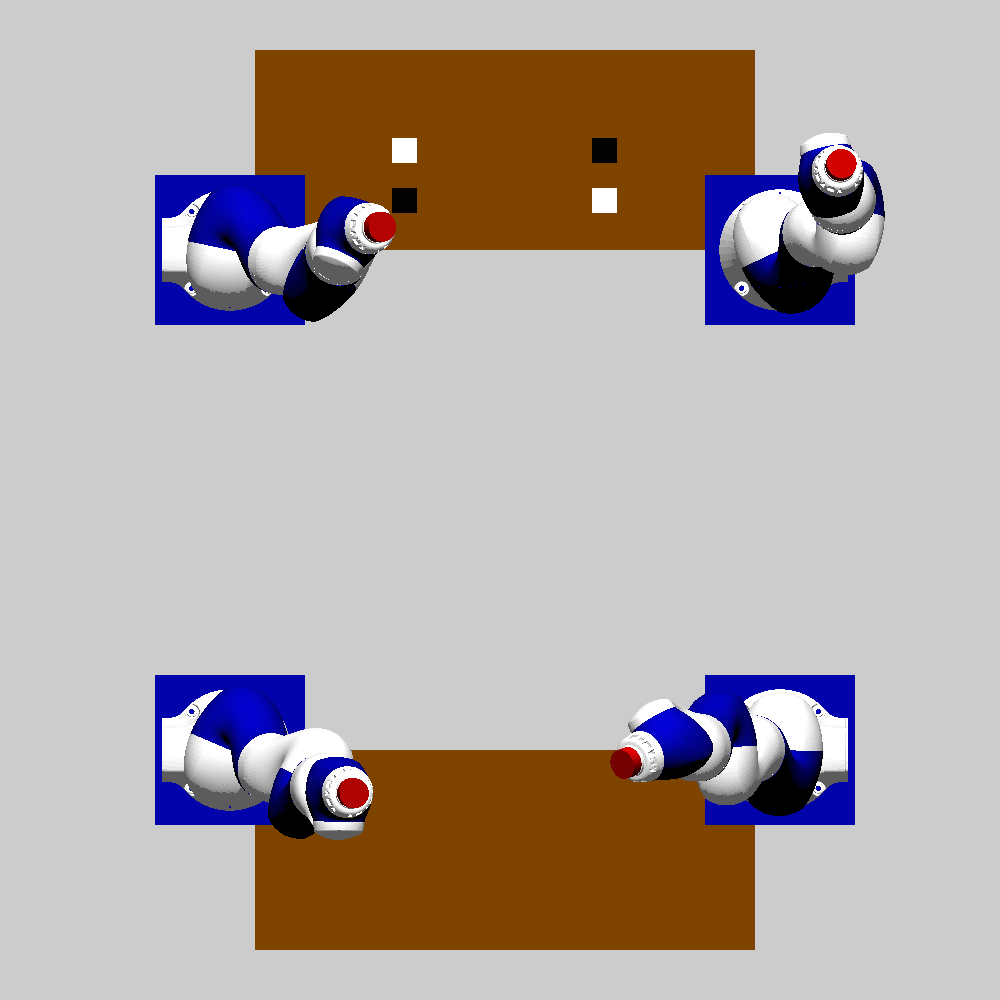}
    \includegraphics[height=0.8in]{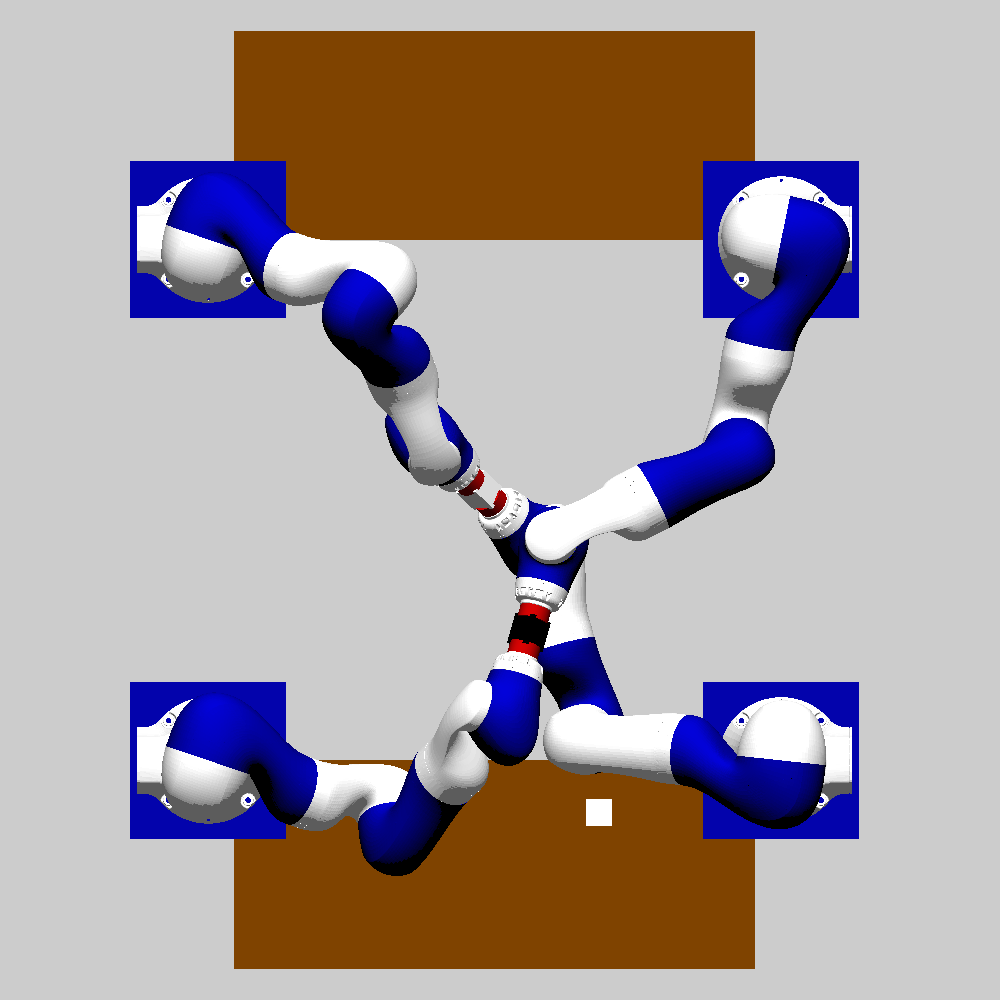}
    \includegraphics[height=0.8in]{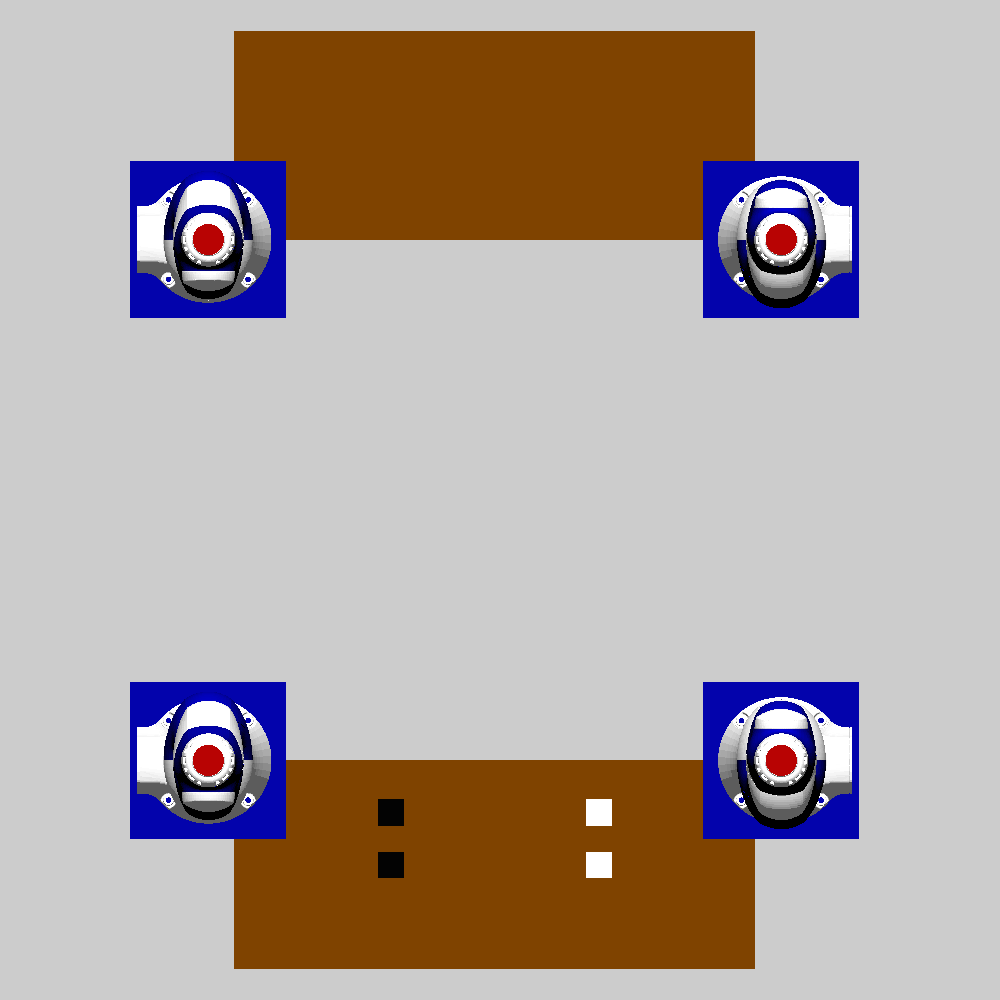}
    \vspace{-.1in}
    \caption{\moma problems: (\textit{Left two: }) A motivating problem involving switching the tables on which the objects lie. Such a problem already needs some high-level guidance to find a solution. (\textit{Right three: }) Steps in a larger problem instance involving 4 arms and 4 objects.}
    \vspace{-0.2in}
    \label{fig:motivation}
\end{figure}

This work focuses on the case that the objects location are not pre-encoded and the arms and not sequestered. Instead the arms need to coordinate to solve object rearrangement tasks, where - beyond picking and placing - handoffs are also required to be performed. Such multi-arm rearrangement challenges are clearly computationally hard. The robotic arms are already high DoF systems. Coordinating multiple such arms to manipulate multiple objects results in an even larger configuration space. Furthermore, the overall planning problem involves searching both the continuous space of each robot and scheduling the discrete sequence of actions, i.e., picking, placing and handoffs.

Consider the example problem shown in  Fig~\ref{fig:motivation}(left), which involves swapping objects between two tables not both reachable by a single arm. A greedy approach for such an object swap -- e.g., the left arm grasps the left object and the right arm grasps the right -- results in a bottleneck, where both arms hold an object and are unable to perform the handoff. The solution is to transfer one of the objects first, and then move the second object. For this example, it may appear that enforcing moving one object at a time is a desirable approach. But this would be highly inefficient for the more general case of Fig~\ref{fig:motivation}(right), which involves more arms and objects. In such cases, it is also desirable to reduce the cost of the solution, typically corresponding to the makespan of the tasks, by simultaneously moving arms and manipulating multiple objects.

The current work focuses on a synchronized version of the problem, where the discrete actions of the arms (picking, placing and handoffs) are synchronized for a subset of the arms, i.e., arms can also be assigned a no-operation action. The paper studies the structure of such synchronized multi-arm rearrangement (\moma), and argues that:

\begin{myitem}
    \item Under assumptions, there is a novel analogy between \moma and multi-agent path finding (\mpp) given a graph abstraction: an object-centric, mode-graph with capacity constraints. Such an abstraction has been recently studied in the \mpp\ literature \cite{surynek2019multi}. Solutions to such problems map well to solutions to a class of \moma problems as long as the underlying motion planning problems can also be solved.
    \item An integer linear programming (\ilp) solution,  defined by building on top of previous work~\cite{yu2016optimal}, is practical and fast for the version of the \mpp over mode graph with capacity that is identified to be appropriate to model \moma problems.
    \item This \ilp solution for the \mpp over mode graph with capacity is effective in guiding the exploration of a forward search tree (\momaalgo) for the problem.
\end{myitem}
Given these observations, the paper demonstrates the applicability to problems involving up to $9$ arms and $4$ objects in simulation, taking at most $10s$ for the harder cases.

\vspace{-.2in}
\section{Related Work}
\vspace{-.15in}

\noindent\textbf{Rearrangement: } Rearrangement planning~\cite{ota2004rearrangement} is a class of manipulation task and motion planning (\tamp) problems. Earlier work focused on efficient solutions to  monotone instances~\cite{stilman2007manipulation}. Efficient solutions to the related assembly planning problem~\cite{Halperin:2000uq} also often assume monotonicity. Over the last decade, the focus incrementally moved to harder \rahul{single-arm} instances of rearrangement and manipulation task planning ~\cite{krontiris2015dealing,krontiris2016efficiently}, \rahul{ and hybrid approaches~\cite{havur2014geometric}. Progress was made in} studying the structure of hard problem instances that lead to efficient solutions~\cite{han2017complexity}. \rahul{The domain of object stacking~\cite{han2018efficient} with a single arm was also explored.} This motivated work on synchronized, dual-arm rearrangement~\cite{shome2018rearrangement}, which managed to map the problem to a sequence of simpler sub-problems. The current work follows the same philosophy to lend structure to a subset of \moma problems that leverages their combinatorial structure and allows for efficiency. \rahul{The object-centric focus of the current work is similar to previous approaches~\cite{han2017complexity,han2018efficient}. The current effort, however, explicitly handles the complexity of dealing with multiple arms and identifies the relationship to capacity constraints on \textit{pebble graphs}.}

\noindent\textbf{Manipulation TAMP: } Early work
 focused on formalizing the problem's multi-modal structure  \cite{Simeon:2004tg,Hauser2011Randomized-Multi-Modal-}. 
 \tamp\ can also be approached in an integrated manner via constrained optimization formulations~\cite{toussaint2015logic}. There are also hierarchical search strategies, which at a low-level call time-budgeted motion planning subroutines, and based on their outcomes, they guide the search~\cite{akbari2018combined,garrett2018ffrob} over actions in the
task space \cite{dantam2016incremental,Kaelbling:2011gb}. Heuristics are important to effectively guide such \tamp algorithms. For instance, in multi-arm manipulation~ an effective heuristic is to consider the path for the object as a free-flying rigid body~\cite{cohen2015planning}. \rahul{More recently multi-arm task planning has been studied using an answer-set programming-based hybrid approach~\cite{saribatur2019finding}. Such hybrid approaches, which take into account symbolic constraints, can address a general set of task planning problems, and can guarantee probabilistic completeness. The current work focuses on the scalability of object rearrangement problems to multi-arm settings and considers aspects of solution quality as well}.
The asymptotic optimality of \tamp\
problems has been investigated~\cite{vega2016asymptotically,schmitt2017optimal}. As the number of robots increases in \tamp, so does the number of modes in the search space of task planning~\cite{dobson2015planning,Harada2014A-Manipulation}. An efficient \tamp\ planner was proposed for single-object, multi-arm manipulation~\cite{shome2019anytime}. The current work builds on top of these efforts \cite{Hauser2011Randomized-Multi-Modal-,vega2016asymptotically,shome2019anytime}. \rahul{
The current work focuses on rearrangement problems involving both multiple arms and multiple objects and aims to identify effective heuristics for an otherwise asymptotically optimal search of the overall search space.
}

\noindent\textbf{Motion Planning: }
Sampling-based approaches have been a popular class of algorithms in
motion planning research ~\cite{Kavraki1996Probabilistic-R,LaValle2001}, including more recently asymptotically optimal (AO)
variants~\cite{Karaman2011Sampling-based-,janson2015fast}.
Recent advances in sampling-based, multi-robot motion planning focused on high \dof\ systems,
such as the \drrtstar\ method \cite{Dobson:2017aa,shome2019drrt},
which effectively decomposes the planning space, while guaranteeing completeness and asymptotic optimality~\cite{SoloveySH16:ijrr}. The current work uses an underlying \drrtstar-like approach to compute simultaneous motions for multiple arms.

\noindent\textbf{Multi-agent Path Finding (\mpp): }
Coupled approaches
\cite{SoloveySH16:ijrr,Wagner:2015bd}
operate in the composite, high-DoF configuration space. They can achieve completeness and optimality in principle but are often computationally intractable. Decoupled solutions~\cite{GhrOkaLav05,Berg:2009ve} reduce the size of the search space by committing to individual agent solutions. They typically lack
completeness and optimality.  Fast, efficient coupled algorithms on graphs have been proposed~\cite{yu2016optimal}, which formalize the optimal multi-agent path finding problem as integer linear program.
Other efforts have focused on solving these problems with SAT solvers~\cite{surynek2016efficient}, and more recently by modeling vertex capacities~\cite{surynek2019multi}.  The current work will draw the relationship between the graphical structures of \mpp problems~\cite{surynek2019multi} and borrows solution frameworks from the corresponding literature~\cite{yu2016optimal} to generate actions sequences for synchronized, multi-arm rearrangement.

\vspace{-0.15in}
\section{Problem Setup and Terminology}
\label{sec:problem}
\vspace{-.1in}

Consider a workspace $\workspace\subset SE(3)$, which contains a set of obstacles, a set of $r$ stationary robot arms $\arms = \{\arm_1,\ldots \arm_r\}$, and a set of $k$ objects $\objectset = \{ \object_1,\ldots\object_k \}$. Each of the arms $\arm_i$ has a state $\state_i$ in a $d_i$-dim. configuration space $\cspace_{i\in[1\ldots r]} \subset \mathbb{R}^{d_i}$. Each object can attain a pose $\pose_j$ in $\Pspace_{j\in[1\ldots k]}\subset SE(3).$ The combined planning space is:
\vspace{-.1in}
$$ \taskspace = \prod_{i\in[1\ldots r]} \cspace_i \times \prod_{j\in[1\ldots k]} \Pspace_j. \vspace{-.1in}$$
Let a task space state then be $\State = (\state_1,\ldots\state_r,\pose_1,\ldots\pose_k) \in \taskspace$.
A subset of this planning space is collision-free given all possible interactions of the arms, objects and obstacles, defined as $\tfree \subset \taskspace$. The arms' end-effectors are allowed to touch the objects for grasping purposes in $\tfree$.

A path for an arm $\arm_i$ is defined as $\pi_i: [0,1] \rightarrow \cspace_i$. Define by $\pi(t_1,t_2)$ the subsequence of the path between $t_1\rightarrow t_2$, where $t_1,t_2 \in [0,1]$.  A composite path for all the arms $\Pi : [0,1] \rightarrow \prod_{i\in[1\ldots r]} \cspace_i$ is defined as the concurrent motion of all the arms. There are $k+1$ task modes of arm motions depending on interactions with the $k$ objects: (i) \textbf{Move mode:} Moves (or transit) paths $\pi^M_i(t_{\rm init}, t_{\rm end})$ are motions of an arm $i$ when no object is carried by the end-effector. (ii) \textbf{Transfer mode:} Transfer paths $\pi_i^T(t_{\rm init}, t_{\rm end})$ are motions of an arm $i$ when an object is carried by the end-effector. There are $k$ such transfer modes i.e., one per object.

A complementary set of $r+1$ task modes exist per object. Each object is constrained by either resting stably on a supporting surface or being grasped by an arm: (i) \textbf{Stable Pose}: A stable pose $\pose_j^{\rm place}$ is a pose for object $j$ that is statically stable given a supporting surface, say a tabletop, which is otherwise an obstacle. (ii) \textbf{Grasped}: An end-effector can maintain a constant relative pose in $SE(3)$ with an object when a \textit{grasp} is engaged.  Manipulation actions that affect the object states above corresponds to \textit{picks, placements}. This is demonstrated in Fig~\ref{fig:taskmodes}. 
Additionally, a special interaction between two arms and one object introduces an additional action \textbf{Handoff}: an instantaneous grasp operation by one arm, and a release operation by another on the same object.

\begin{wrapfigure}{r}{0.5\textwidth}
    \centering
    \vspace{-0.3in}
    \includegraphics[width=0.49\textwidth]{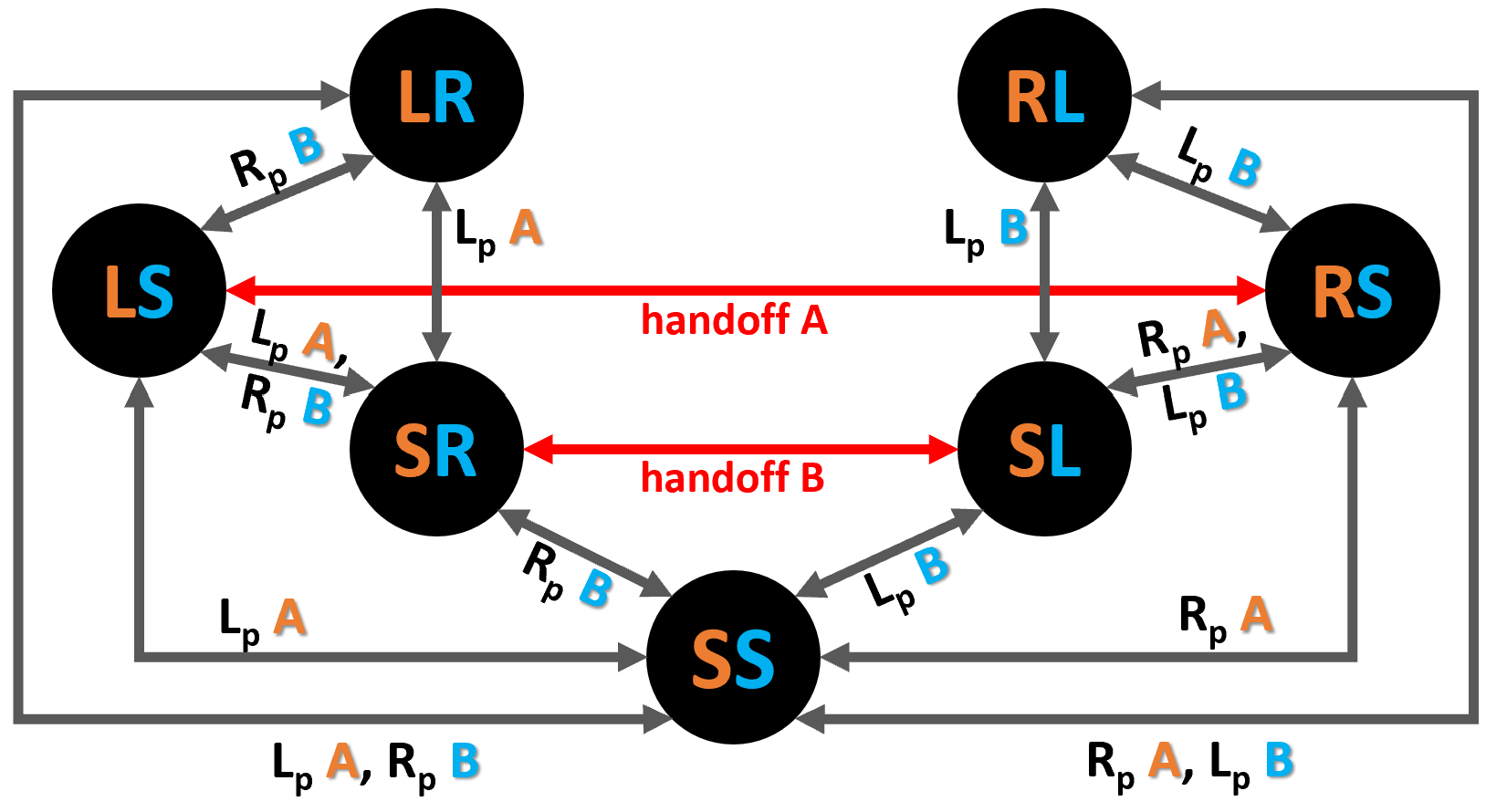}
    \vspace{-0.1in}
    \caption{The image shows the connectivity of different modes for a \moma problem with 2 arms $L$, and $R$ describing pick/place actions $L_p, R_p$ and 2 objects ($A,B$), with object-centric modes (S-stable, L-grasped by left, and R-grasped by right), black pick or placement edges, and red handoff edges.}
        \vspace{-0.3in}
    \label{fig:taskmodes}
\end{wrapfigure}

Let a set of object poses define an arrangement $\arrangement = (\pose_1 \ldots \pose_k)$.  A multi-arm manipulation path $\Pi$ is valid if every task space state along it is collision free for all arms and objects.

\textbf{Rearrangement Problem:} An object rearrangement problem consists of an initial arrangement of objects at $\ainit = (\pose_1^{\rm init} \ldots \pose_k^{\rm init})$ and a target arrangement $\atarget = (\pose_1^{\rm goal} \ldots \pose_k^{\rm goal})$. Each pose in $\ainit$ and $\atarget$ is a stable pose. 
A feasible solution to the rearrangement problem is a valid multi-arm manipulation path that transfers the objects from $\ainit$ to $\atarget$. Along this solution the objects are acted upon by sequences of \textit{picks, places, and handoffs}.

The cost of the solution path $\cost: \prod \rightarrow \mathbb{R}$ is assumed to be the maximum of the Euclidean arc lengths for each $\pi_i$ in $\Pi$, more commonly called makespan. The optimal solution to the multi-arm rearrangement problem $\Pi^*$ is a feasible solution that minimizes the cost.

\subsection{Modeling Choices and Assumptions}
\label{sec:foundations}

A popular framework~\cite{Hauser2011Randomized-Multi-Modal-}\cite{dobson2015planning} for \tamp\ problems corresponds to searching over the space of modes by building a graph that transitions between neighboring modes at the task planning level. This approach makes motion planning calls within each mode so as to identify the transitions. The current work follows a similar framework and builds a tree in the s

\noindent{\bf Conditions on the High-level Task Planner and Solutions Discovered:} In order to study the complexity of the combinatorially large problem of multi-arm rearrangement, the current work considers a simplification of the search-space. We assume that the task planning solution trajectory is decomposed into discrete steps along which the arms can perform \textit{synchronized} execution of manipulation actions, similar to previous work~\cite{shome2018rearrangement}, while allowing a subset of arms to not be involved in any action at a step. Effectively, the set of actions available to each arm is (\textit{pick,place,handoff,NOACT}), where \textit{NOACT} means that the arm has not immediate objective during the current step. This narrows down the search space in two ways: a) each action has a fully defined set of choices (available objects for picks, available placement regions for place, and free arms for handoffs) which makes it sufficient to evaluate these combinations, and b) a sequence of steps is a sequence of actions being assigned to arms. This essentially poses steps as the discretization of time over the task planning solution. Note that during each step all the arms still have to perform centralized coordinated motions. 

\begin{assumption}(Synchronicity)
The task planning solution is decomposed into a sequence of synchronized manipulation actions (picks, placements, handoffs, and NOACT), such that the end of the step is attained only when all the manipulators (except those assigned NOACT) complete their respective actions. It is assumed that a synchronized multi-arm rearrangement (\moma) problem is solvable by such a synchronized solution.
\label{ass:sync}
\end{assumption}

Now that the general problem has been simplified to narrow it down to a search over assignment of manipulation actions to arms over a sequence of steps, the next section goes on to describe how this can be efficiently solved, and then incorporated into a high-level task planner for efficiently guiding solutions.

\subsubsection{Conditions for the Heuristic to be Effective:}

Despite the previous simplification, the number of possible choices available for the assignment of actions to arms and their sequence remains a prohibitively large space to naively search. We introduce additional set of assumptions to bring some structure to the search space, that can describe a subset of the synchronized multi-arm rearrangement problems.

\begin{condition}(Discrete Placement Regions)
Assume that there exists a set of discrete placement regions $$\pregion = \{ \region_1\ldots \region_P \}, \ \region_p \subset SE(3),\  \pose^{\rm place}_j \in \region_p\ \forall \pose^{\rm place}_j.$$

It is assumed that for each such placement region, the set of arms that can reach all the poses contained in the set is the same for every object pose in the region. Additionally, each placement region also has a specific capacity describing the maximum number of objects that can be guaranteed to concurrently lie in it.
\label{ass:discreteplacement}
\end{condition}

\begin{condition}(Uniformity of Object Reachability)
Though there is no explicit restriction on the objects being different, they must possess uniform reachability w.r.t. the decomposition of the discrete placement regions. This means, for every object the set of arms that can reach all poses in a specific placement region are identical. Note that the capacity of the placement region is also assumed to account for the the different object sizes.
\label{ass:objectuniformity}
\end{condition}

This object reachability condition is trivially satisfied if the objects are identical. It should be pointed out that it is desirable to have a minimal number of placement regions, which permits not having to reason about individual poses during the high-level search, thereby reducing the search space. These capacity constrained placement regions can more accurately capture the \textit{discretization} of the workspace (Fig~\ref{fig:moma_mpp}). The sequence of arm actions required for a problem involving a specific combination of placement region should remain unchanged by the specific poses involved. This can promote the reuse of such high-level plans as well.

\begin{condition}(Object Non-interactivity)
For a \textit{pick} action by an arm $\arm$ performed on an object at pose $\pose$, if the pose lies in a reachable placement region, the feasibility of the \textit{pick} action is independent of any other object. For a \textit{place} action by an arm $\arm$ performed on an object to take it to pose $\pose$, if the pose lies in a reachable placement region, the feasibility of the \textit{place} action is only violated by an object currently at a pose intersecting with $\pose$. 
\label{ass:monotone}
\end{condition}

\rahul{Note that object non-interactivity is guaranteed for a tabletop setup with overhead grasps where the start and goal poses of the objects do not intersect.} The condition itself does not preclude non-monotonicity, as demonstrated in Section~\ref{dem:nonmonotone}. Consider a problem that violates Condition~\ref{ass:monotone} and is non-monotone: two objects placed one behind another inside a narrow shelf requiring grasping the deeper object first. Such problems lie outside the efficient subset of \moma problems addressed in this work.

\section{Mode Graph with Capacity Constraints}
\label{sec:modegraph}
Despite the assumptions formulated in the previous section, as the number of arms and objects grows, given the number of available manipulation actions, the assignment of actions to arms, and sequencing them expresses a large search space (shown for only 2 arms and 2 objects in Fig~\ref{fig:taskmodes}). The possible actions available seem to create a notion of connectivity, where available actions are expressed by the setup of the workspace. For instance, if arm $\arm_1$ can reach the placement region $\pregion_1$, but $\arm_2$ cannot, then a pick action on an object in $\pregion_1$ is not a valid action available to $\arm_2$. 
A key insight is that it might be possible to think of the problem in terms of the objects (as in previous work~\cite{cohen2015planning}) as they traverse the connectivity expressed by allowable actions performed upon them. Another key insight is that the connectivity only expresses one of the constraints. There exists a notion of \textit{capacity} for manipulators and placement regions i.e., maximum number of objects involved in single manipulation actions. 

Inspired by the formulation of the multi-agent path finding problem \mpp with vertex capacity constraints~\cite{surynek2019multi}, a contribution of the current work is to pose a object-centric mode graph to express the \moma.

\begin{definition}(Object-Centric Mode Graph)
\rahul{An object-centric mode graph is a directed, weighted graph with vertex capacities. The vertices are either (i) placement regions or (ii) arms; edges map to manipulation actions with the cost of the action encoded in the edge weight; the capacities denote how many objects can occupy a vertex.}
\begingroup
\allowdisplaybreaks
\begin{align*}
    \modegraph{\modenodes}{\modeedges},& \quad \text{has nodes\ } 
    \modenodes = \{ \modes \in \pregion \cup \arms \},\quad \text{and edges\ }
    \modeedges = \{ \actions(\modes_u, \modes_v)\}\\
        \actions(\modes_u, \modes_v) \in& \modeedges \ \text{ if}
    \begin{cases}
    \text{Pick:\ } \modes_u \in \pregion,\ \modes_v \in \arms,\ \text{and }\ \modes_v\ \text{can reach}\ \modes_u\\
    \text{Place:\ } \modes_u \in \arms,\ \modes_v \in \pregion,\ \text{and }\ \modes_u\ \text{can reach}\ \modes_v\\
    \text{Handoff:\ } \modes_u \in \arms,\ \modes_v \in \arms, \modes_u\neq\modes_v,\ \text{and }\ \modes_u\ \text{can reach}\ \modes_v
    \end{cases}\\
    \weights :&\ \modeedges\rightarrow \mathbb{R} \ \text{ are the edge weights}\\
    \capacity :&\ \modenodes\rightarrow \mathbb{W} \ \text{ are the vertex capacities}\\
    \capacity(\modes_u) =&
    \begin{cases}
    1\quad&\text{ if }\quad\modes_u\in\arms\\
    \text{ number of objects that can fit in  }\modes_u\quad&\text{ if }\quad\modes_u\in\pregion
    \end{cases}\\
\end{align*}
\endgroup
\label{def:modegraph}
\vspace{-0.4in}
\end{definition}

\rahul{Each mode $\mode$ in the set of vertices $\modenodes$ corresponds either to placement regions $\pregion$ or arms $\arms$. This means that $card(\modenodes) = card(\pregion) +card(\arms)$.} 

Given $k$ objects, let the set of modes they instantaneously occupy define a \textit{multi-modal state} on the mode graph, $\mstate = (\modes_1,\ldots\modes_k)$.

By definition of the placement regions $\pregion$, a stable arrangement consists of objects lying in a set of nodes of the mode graph $\modeg$. This means,$ \ainit = (\pose_1^{\rm init}\ldots\pose_k^{\rm init}), \quad \pose_j^{\rm init}\in \region_p \in \modenodes\quad \forall \pose_j^{\rm init}$.
Define this mapping as $\modeinv(\ainit) = \mstate^{\rm init}$.
This defines a set $(\pose_1^{\rm init}\ldots\pose_k^{\rm init})$ maps to a set of mode-graph nodes $\mstate^{\rm init} = (\modes_1^{\rm init}\ldots\modes_k^{\rm init})$ for $\object_j$. 

Similarly define  $\mstate^{\rm goal} = (\modes_1^{\rm goal}\ldots\modes_k^{\rm goal})$ for $\atarget$  for each object $\object_j$. With a slight abuse of notation, define an inverse mapping a fully defined task space configuration maps to a set of modes on $\modeg$, such that $ \modeinv(\State) = \mstate$.

\textit{Connection to Task Modes: Compared to Fig~\ref{fig:taskmodes}, $L,R$ map to modes for the arms in $\modeg$, All the placement regions correspond to $S$. For 2 objects, their multi-modal state $\mstate$ lies on one of the nodes in Fig~\ref{fig:taskmodes}. Transitioning to an adjacent $\mstate$ moves both objects, and traverses two edges in $\modeg$ and one edge in the task mode graph.}

By the formulation of feasible multi-arm manipulation solutions, each arm executes a sequence of \textit{moves} and \textit{transfers}, where an arms can manipulate a specific object by \textit{picking}, \textit{placing}, or \textit{handoff}. 
For each object $\object_j$ during the execution of a valid $\Pi$ denote the discrete manipulation actions by the tuples $(\arm_i, t^j)$ for picks, $(\pose_p, t^j)$ for placements, and $(\arm_i, t^j)$ for handoff to $\arm_i$ at instants $\Pi(t^j)$ respectively. 
As an illustrative example, $\Pi$ yields the following sequences of timed manipulation actions from the initial to target poses of each of the $k$ objects, using \textit{picks, placements, and handoffs}. This describes: $\object_j \xrightarrow{\Pi} [ (\pose_j^{\rm init}, 0), (\arm_i, t_j^1), \ldots (\pose_j^{\rm goal}, 1) ]\quad \forall o_j$

Each of the poses in the above sequences correspond to some placement region $\pregion$ by Assumption~\ref{ass:discreteplacement}. Each pick and handoff corresponds to an arm in $\arms$. This means every tuple in the sequences above correspond to some $(\modes, t)$ where $\modes\in\modenodes$ in the mode graph at a specific time parametrization $t$.

\begin{figure}[t]
    \centering
    \vspace{-0.1in}
    \includegraphics[width=0.98\textwidth]{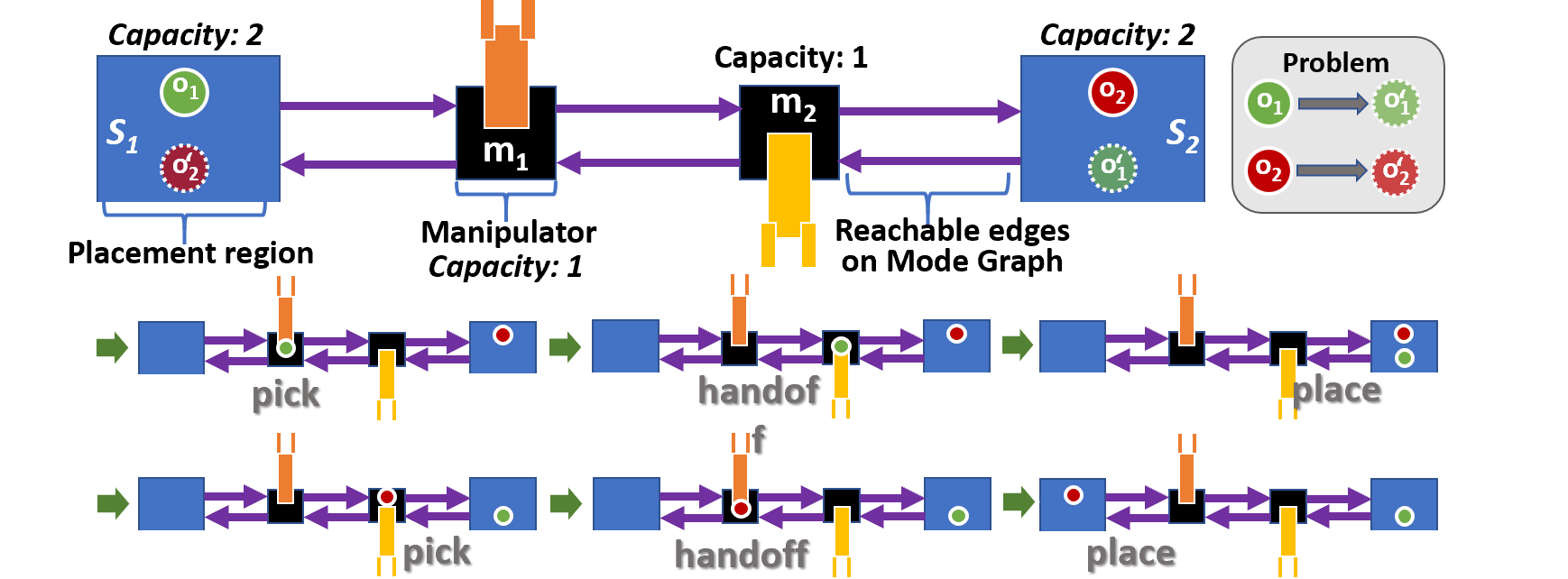}
    \vspace{-0.2in}
    \caption{The figure shows an instance of a multi-arm multi-object rearrangement problem, and the corresponding mode graph $\modeg$, with the \mpp solution obtained for affecting the rearrangement of both the red and green objects across the two placement regions $\pregion_1$ and $\pregion_2$.}
    \vspace{-0.2in}
    \label{fig:moma_mpp}
\end{figure}

\begin{definition}(Multi-agent Path Finding (\mpp))
\rahul{Given a starting configuration $\mstate^{\rm init}$, and a desired final configuration is $\mstate^{\rm goal}$ on the mode graph $\modeg$, a multi-agent path finding solution finds a sequence of \textit{valid} vertices and edges for each object that takes them from their start to goal vertices.} 
\label{def:mpp}
\vspace{-0.1in}
\end{definition}

A solution to the \mpp problem defines for every object a sequence of nodes on the graph along with the time at which the object starts occupying the node, such that each object begins at the initial node, and ends at the target node.
Let $\Pi_\objectset$ be a feasible solution to the \mpp problem of $\objectset$ agents on $\modeg$, consisting of a sequence of multi-modal states $\Pi_\objectset=[\mstate^{\rm init},\ldots\mstate^{\rm goal}]$. In terms of each object:
\vspace{-0.08in}
\begin{align*}
    \Pi_\objectset = 
    \begin{cases}
    \pi_{\object_j} : [(\modes^j_{\rm init}, 0),\ldots (\modes^j_l, t)\ldots (\modes^j_{\rm goal}, 1)] \quad \forall \pi_{\object_j}
    \end{cases}
\vspace{-0.1in}
\end{align*}
\textbf{Constraints on motions:}
Certain constraints on the allowable motions of the objects over the mode graph $\modeg$ respect the restrictions of the problem. The following two are derived from the classical formulation of \mpp: (i) at any time $t$, no vertex $\modes$ exceeds its capacity $\capacity(\modes)$, and (ii) agents do not cross each other over the same edge 
Vertex capacities in $\modeg$ is an additional constraint that needs to be imposed to make the problem amenable to multi-arm rearrangement. \textit{Vertex capacities cap the maximum number of objects that can move over all the in-edges and out-edges of every vertex at any step}.

\noindent\textbf{Cost of \mpp:} Each edge on the graph has a weight defined by $\weights$. For each object, the solution cost is $\cost(\pi_{\object_j}) = \sum \weights(\actions_t),\ \forall \ \actions_t \text{ in the solution}$.
The cost of the solution $\cost(\Pi_\objectset)$ is typically defined as the maximum of each component $\cost(\pi_{\object_j})$. 

\noindent\textbf{Optimal \mpp:} An optimal \mpp solution, $\Pi_\objectset^*$ is a feasible \mpp solution that also minimizes the cost of the \mpp solution.

\begin{theorem}(Multi-arm Rearrangement Solution Solves \mpp on $\modeg$)
Each feasible solution to the multi-arm multi-object rearrangement problem $\Pi$ for $r$ arms and $k$ objects from arrangement $\ainit=(\pose_1^{\rm init}\ldots\pose_k^{\rm init})$ to $\atarget=(\pose_1^{\rm goal}\ldots\pose_k^{\rm goal})$ reduces to a feasible solution to a multi-body path planning problem of $k$ agents on $\modeg$ defined in Def~\ref{def:modegraph} from the corresponding $\mstate^{\rm init}$ to $\mstate^{\rm goal}.$
\end{theorem}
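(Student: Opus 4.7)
The plan is to construct, from a given feasible multi-arm rearrangement solution $\Pi$, an explicit sequence of multi-modal states $[\mstate^{\rm init},\ldots,\mstate^{\rm goal}]$ on the mode graph $\modeg$ and to verify that this sequence is a feasible \mpp solution. First, for each object $\object_j$, I would extract from $\Pi$ the chronological sequence of discrete manipulation events that act on that object (the picks, placements, and handoffs at instants $t^j$), yielding per-object timed trajectories of the form $[(\pose_j^{\rm init},0),(\arm_{i_1},t^j_1),\ldots,(\pose_j^{\rm goal},1)]$. Under Condition~\ref{ass:discreteplacement} every stable pose in this sequence lies in some placement region $\region_p$, and by definition each arm corresponds to a mode in $\modeg$, so every tuple lifts to a pair $(\modes,t)$ with $\modes\in\modenodes$. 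Concatenating in time across all $k$ objects and evaluating at the synchronized step boundaries of Assumption~\ref{ass:sync} yields a finite, ordered sequence of multi-modal states whose endpoints are exactly $\modeinv(\ainit)=\mstate^{\rm init}$ and $\modeinv(\atarget)=\mstate^{\rm goal}$.

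The second step is to verify that every consecutive transition $(\modes_l^j,t_l)\to(\modes_{l+1}^j,t_{l+1})$ along an object's trajectory traverses a valid edge of $\modeg$. By construction, each such transition is either a pick, a place, or a handoff, and the three cases of Def~\ref{def:modegraph} align exactly with these three action types: a pick moves the object from a $\pregion$-mode to an $\arms$-mode, a place does the reverse, and a handoff moves it between two distinct $\arms$-modes. The reachability requirement on each edge is discharged because $\Pi$ is a valid manipulation path, so every action is kinematically feasible for the involved arm and region; here Condition~\ref{ass:objectuniformity} is invoked to guarantee that reachability depends only on the region, not on the specific pose of the object inside it.

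The third step is to check the vertex-capacity and edge-traversal constraints inherited from \mpp. For arm-modes, $\capacity(\modes)=1$ holds because each arm physically carries at most one object, and synchronicity guarantees no two objects are ever simultaneously assigned to the same arm. For placement-region modes, $\capacity(\region_p)$ is defined precisely as the maximum number of objects that can coexist in $\region_p$, and Condition~\ref{ass:monotone} ensures that the picks and places of $\Pi$ are compatible with this bound so long as poses do not physically overlap, which the collision-freeness of $\Pi$ guarantees. The no-swap property on shared edges follows from the directionality of pick/place/handoff edges and from the observation that two objects exchanging roles across the same edge during a single synchronized step would require at least one arm to be concurrently empty and full, contradicting the validity of $\Pi$.

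The main obstacle will be making the time-discretization bridge between the continuous manipulation path and the discrete \mpp step structure fully rigorous. A rearrangement solution $\Pi$ is a continuous trajectory, whereas an \mpp solution advances in discrete synchronized rounds, so one must show that the decomposition induced by Assumption~\ref{ass:sync} produces well-defined boundary instants at which every object simultaneously occupies exactly one node of $\modeg$ and at which all capacity constraints hold jointly. I would argue this inductively: at $t=0$ the multi-modal state is $\modeinv(\ainit)$ and is feasible; given a feasible boundary state at step $s$, the synchronized actions issued during step $s{+}1$ move each involved object along one valid $\modeg$-edge and the resulting assignment respects $\capacity$ because $\Pi$ itself respects the underlying physical capacities. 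Iterating until the final step yields $\mstate^{\rm goal}$, completing the reduction.
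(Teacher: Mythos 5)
Your proposal is correct and follows essentially the same route as the paper: the paper's (much terser) proof likewise maps each object's timed sequence of picks, placements, and handoffs in $\Pi$ onto vertices and edges of $\modeg$ via the one-to-one correspondence between manipulation actions and mode-graph edges, with endpoints $\mstate^{\rm init}$ and $\mstate^{\rm goal}$. Your additional explicit checks of edge validity, reachability via the stated conditions, capacity, and the synchronized discretization simply spell out details the paper leaves as ``by construction.''
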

\begin{proof}
\rahul{By construction, there is a one-to-one correspondence between the manipulation actions necessary for the rearrangement problem and edges in $\modeg$}. Solutions in the complex task planning space map to the solution to the \mpp problem on a graph, where each object starts moving at the timestamp corresponding to action involving the object in the task planning solution $\Pi$, before progressing to the next vertex on $\modeg$. At the end of the solution, every object ends up in the target vertices on $\modeg$. \qed

\end{proof}

\begin{definition}(Discrete \mpp on $\modeg$)
In this problem, starting from an initial set of vertices $\mstate^{\rm init}$ in $\modeg$, at each \textit{discrete} step $t$ each object can move to an adjacent vertex or stay in place. Each edge traversal incurs a cost corresponding to the edge weight.  
After tracing the solution steps each object reaches the final configuration $\mstate^{\rm goal}$.
\end{definition}

\textit{Note that in the discrete version, every time step and edge traversal is atomic.}
This aligns with the synchronicity assumption of \moma as well.
The number of edges in the solution, per object, represents the number of \textit{picks, places or handoffs} involved. For each object the cost of the solution being the number of actions is analogous to counting the number of edges in the object's solution, i.e., $ \text{if}\ \weights: \modeedges \rightarrow 1, \ \cost(\pi_{\object_j}) = \text{\#actions} $.

\begin{theorem}(An action optimal \mpp solution is an admissible heuristic for Synchronized Multi-arm Rearrangement)
\rahul{The number of steps in the \mpp solution over $\modeg$ is less than or equal to the number of synchronized actions (\textit{picks, places, and handoffs}) in the corresponding synchronized multi-arm rearrangement solution.}
\end{theorem}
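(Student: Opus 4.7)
The plan is to argue admissibility by exhibiting, for any synchronized multi-arm rearrangement (\moma) solution, a corresponding feasible discrete \mpp solution on $\modeg$ whose number of steps is no greater than the number of synchronized actions in the \moma solution. Once this is established, the action-optimal \mpp solution, which by definition minimizes the number of steps over all feasible \mpp solutions, is automatically a lower bound on the \moma cost and hence an admissible heuristic.

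First, I would invoke the previous theorem to map a given synchronized solution $\Pi$ with $n$ synchronized steps (each step being an assignment of \textit{pick}/\textit{place}/\textit{handoff}/\textit{NOACT} to each arm) to a sequence of multi-modal states $[\mstate^{\rm init}, \mstate^1, \ldots, \mstate^{\rm goal}]$ on $\modeg$. The key observation is the synchronicity assumption (Assumption~\ref{ass:sync}): every arm completes its assigned action before the next step begins, so all object traversals across edges of $\modeg$ triggered during a single synchronized step can be identified with a single discrete step in the \mpp sense. Objects whose carrying arm performs \textit{NOACT}, or whose placement is untouched, simply stay in place at that step, which is a permitted \mpp move.

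Next, I would verify that the induced sequence is a feasible discrete \mpp solution. The vertex capacity constraint on arm-nodes (capacity 1) is enforced because each arm holds at most one object; the vertex capacity on placement regions follows from Condition~\ref{ass:discreteplacement}. The edge non-crossing constraint reduces to checking that no two objects traverse the same edge of $\modeg$ in opposite directions within a single synchronized step; this follows because a single arm can participate in only one manipulation action per step, and each edge of $\modeg$ is incident to a specific (placement-region, arm) or (arm, arm) pair. Object non-interactivity (Condition~\ref{ass:monotone}) rules out pathological dependencies within a step.

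The conclusion is immediate: the induced feasible \mpp solution has exactly as many steps as the number of synchronized actions in $\Pi$, so the action-optimal \mpp solution satisfies $\cost(\Pi^*_\objectset) \leq n$, where the edge weight map is $\weights \equiv 1$. Applied to the optimal \moma solution, this yields admissibility. The main obstacle I anticipate is carefully justifying that each synchronized step genuinely collapses to a single atomic \mpp step while respecting the vertex-capacity and edge-crossing constraints; this is where the conditions of Section~\ref{sec:foundations} (synchronicity, discrete placement regions with uniform reachability, and object non-interactivity) are doing the real work, and I would make their roles explicit in the write-up. \qed
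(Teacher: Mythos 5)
Your proposal is correct and takes essentially the same route as the paper: the synchronized solution is mapped (via the preceding reduction theorem and the synchronicity assumption) to a feasible atomic \mpp solution on $\modeg$, feasibility of the corresponding edges is guaranteed by the object non-interactivity and reachability conditions, and optimality of the action-optimal \mpp solution then yields the lower bound and hence admissibility. You merely spell out the feasibility checks (vertex capacities, edge constraints) more explicitly than the paper's terse proof does.
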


\begin{proof}
The cost of the solution is $\cost(\Pi_\objectset) = \max (\cost(\pi_{\object_j}),\ldots \cost(\pi_{\object_j}))$. Using this cost, the optimal solution $\Pi_\objectset^*$ minimizes the maximum number of actions on any object. 
This analogy is applicable under the assumption set up in Section~\ref{sec:foundations}. The synchronicity assumption allows for these synchronized actions that reflect the atomic multi-agent steps over the mode graph. Additionally, the object non-interactivity, and reachability assumptions ensure that the edges on the mode graph are effectively independent of each other and reflect \textit{feasible} and \textit{reachable} actions for the arms. \qed
\end{proof}

In general the feasibility of \rahul{high-dimensional motion planning for} each manipulation action on the \mpp solution is not guaranteed. The \mpp solution can still be useful, however, as a suggested sequence of actions that can be checked first for feasibility.

\begin{observation}(A \mpp solution on $\modeg$ describes a sequence of actions for each arm)
\rahul{A \mpp solution to $\mstate^{\rm goal}$ over the mode graph corresponds to a sequence of manipulation actions, which if collision-free will bring the objects to the poses in $\mstate^{\rm goal}$}.
\label{thm:heuristic}
\end{observation}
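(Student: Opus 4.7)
The plan is to constructively decode a discrete \mpp solution on $\modeg$ into a per-arm schedule of manipulation actions, and then argue that executing that schedule (whenever the underlying motion planning is feasible) realizes the prescribed object modes. First I would unpack the \mpp solution as the sequence of multi-modal states $[\mstate^{\rm init}, \mstate^1, \ldots, \mstate^{\rm goal}]$ and, at each transition $\mstate^t \to \mstate^{t+1}$, isolate the subset of objects whose mode has changed, i.e., those $\object_j$ with $\modes^j_t \neq \modes^j_{t+1}$. For each such object, the traversed edge $\actions(\modes^j_t, \modes^j_{t+1}) \in \modeedges$ is by Definition~\ref{def:modegraph} exactly one of pick, place, or handoff, and its endpoints directly identify the arm, or pair of arms in the handoff case, that must execute the action.

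Next I would extract the per-arm schedule by iterating over the discrete steps in order. For each edge traversed at step $t$, the associated manipulation action is appended to the schedule of the arm(s) inferred from the edge's endpoints; arms not implicated by any edge at step $t$ receive \textit{NOACT}, consistent with Assumption~\ref{ass:sync}. I would then verify that the assignment is well defined: since $\capacity(\modes_u) = 1$ for every $\modes_u \in \arms$, at most one object enters and at most one leaves each arm's vertex per step, so no arm is scheduled to perform two actions simultaneously. The grasp state of each arm at step $t$ (empty, or holding exactly the object currently occupying its vertex in $\mstate^t$) is consistent across consecutive steps, because pick and handoff edges attach an object to an arm's vertex while place and handoff edges detach it.

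Third, I would tie the decoded action sequence back to the rearrangement problem. By Condition~\ref{ass:discreteplacement} and the mapping $\modeinv(\ainit) = \mstate^{\rm init}$, the placement-region vertices occupied in $\mstate^{\rm init}$ correspond to the poses in $\ainit$; the analogous statement holds for $\mstate^{\rm goal}$ and $\atarget$. Conditions~\ref{ass:discreteplacement} and~\ref{ass:objectuniformity} then ensure that any pick, place, or handoff suggested by the graph is executable by the identified arm for any representative pose in the associated region, and Condition~\ref{ass:monotone} guarantees that the feasibility of each action is not subtly affected by objects residing in unrelated placement regions. Hence, conditional on each synchronized motion-planning query being collision-free, concatenating the resulting arm trajectories across all steps yields a valid multi-arm manipulation path $\Pi$ that brings the objects into the poses indicated by $\mstate^{\rm goal}$.

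The hard part is not the decoding itself, which is essentially bookkeeping over edges, but isolating precisely what the observation claims. The statement is conditional on collision-free motion planning, so the plan makes explicit that only the \emph{action sequence} and its task-level correctness are guaranteed by the argument; the existence of valid joint-space motions for each synchronized step remains a separate obligation to be discharged by the downstream planner, and this is exactly what motivates using the \mpp solution as a \emph{heuristic} rather than a standalone planner in the remainder of the paper.
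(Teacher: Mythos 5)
Your proposal is correct and takes essentially the same route as the paper, which presents this as an observation justified by construction: since every edge of $\modeg$ is by Definition~\ref{def:modegraph} a pick, place, or handoff whose endpoints identify the arm(s) involved, the per-object vertex/edge sequence of a \mpp solution decodes directly into a per-step, per-arm action schedule, with the feasibility of the underlying arm motions left as the explicit collision-free proviso. Your extra bookkeeping (unit arm capacity giving one action per arm per step, \textit{NOACT} for idle arms, and invoking Conditions~\ref{ass:discreteplacement}--\ref{ass:monotone} to tie region vertices back to poses) simply spells out what the paper leaves implicit.
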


A configuration of arms and objects $\State^{\rm current}\in\taskspace$ describes the corresponding multi-modal state $\mstate^{\rm current} = \modeinv(\State)$. Define the \mpp problem from $\mstate^{\rm current}$ to $\mstate^{\rm goal}$. 
\rahul{
The \mpp solution $\Pi_\objectset = (\mstate^{\rm current},\mstate^{\rm next},\ldots\mstate_l\ldots\mstate^{\rm goal})$ describes a sequence of vertices on $\modeg$ and corresponding edges or actions(\textit{picks, places and handoffs}) for each object and arm. 
}
\vspace{-0.1in}
\begin{align*}
    \heuristic = \Big(   \actions_1, \actions_i,\ldots \actions_r   \Big), 
    \actions_i = \begin{cases}
        \emptyset,\quad& \text{ if arm }\ \arm_i\ \text{not in solution }\ \Pi_\objectset\\
        \actions,\quad& \text{ the first action involving arm } \arm_i\ \text{in } \Pi_\objectset
                    \end{cases}
\end{align*}
\vspace{-0.05in}

A slightly different estimate is used as the multi-modal goal $ \modegoal = \Big(   \actions_1, \actions_i,\ldots \actions_r   \Big)$, 
\rahul{which comprises of an action per arm required next, i.e., $  \actions_i = \emptyset$ if arm $\ \arm_i$ not used to transition to $\mstate^{\rm next}$. Otherwise the corresponding the action involving the arm $i$ in $\mstate^{\rm next}$ is used. Thus, we obtain a longer horizon heuristic and an immediate biasing goal for defining the next action of each arm to trace the \mpp solution towards $\mstate^{\rm goal}$.}

Note that while $\modegoal$ makes stepwise progress along the \mpp solution, the heuristic expresses something more powerful. By biasing towards the heuristic, each arm can \textit{pre-empt} what is required of it next, even if it is farther into the future than one single step of the \mpp solution.

\section{Integer Linear Program for MAPF on Mode Graph}
\label{sec:ilp}

This section outlines the solution to the \mpp problem over the mode graph. An integer linear programming model is set up on the lines of previous work~\cite{yu2016optimal}. At a high-level, the method takes as input the mode graph, and constructs a \textit{time-expanded} graph where a) nodes are replicated across time slices, and b) each pair of bidirectional (or undirected) edge is replaced with a gadget 
(Fig~\ref{fig:gadget})  
across two time slices. 
\begin{wrapfigure}{r}{0.3\textwidth}
	\centering
	\vspace{-0.3in}
	\includegraphics[width=0.29\textwidth]{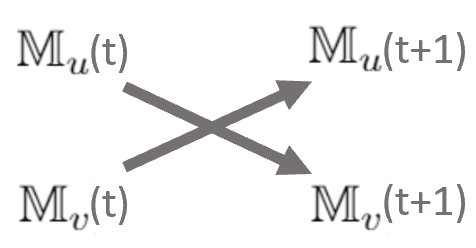}
	\vspace{-0.1in}
	\caption {
	Gadget for constructing the time-expanded graph from the mode graph.
	}
	\vspace{-0.1in}
	\label{fig:gadget}
\end{wrapfigure}
By connecting the start and goal nodes between the first and last time slices, the solution to the multi-commodity flow problem over the time-expanded graph ($\modeg^T$) describes an \mpp solution.

For the sake of brevity most of the replicated details are omitted here. The specific addition required to guarantee solutions that are usable over the mode graph, formulates the capacity constraints at the modes. Inspired by previous work~\cite{surynek2019multi} the notion of capacity has to be modeled correctly. These capacity constraints essentially encode the restriction that \textit{through} any instant step (or time slice) at capped number of objects can interact with the mode.

Let $v$ denote a node in the time expanded graph corresponds to some mode $\modes_v$, and $e_j$ denote each edge. There are indicator variables assigned to each robot ($i\in[1\ldots k]$) moving across an edge $e_j$ as $x_{i,j}$. Let $\delta^+(v)$ be the out-edges and $\delta^-(v)$ be out-edges.

\vspace{0.1in}
\textbf{Capacity Constraint: } $\underset{{e_j\in\delta^+(v)}}{\sum} x_{i,j} + \underset{{e_j\in\delta^-(v)}}{\sum} x_{i,j} \leq \capacity(\modes_v)\  \forall v \in \modeg^T\ , 1\leq i \leq k $
\vspace{0.1in}

Minimizing the maximum traveled distance with uniform modal edge costs  minimizes the maximum number instantaneous actions in the \mpp solution over $\modeg$.

\noindent {\bf Implementation Details}:
The current objective is to use an underlying \mpp solver as a quick heuristic over the mode graph $\modeg$. It is therefore beneficial to encode some measure of the cost of the actions represented by edges in the mode graph in \mpp.

\textbf{Costs:} The cost of an edge in the mode graph provided by the $\weights$ function is assumed to be the position distance in $\mathbb{R}^3$ of the a) centroid of the placement region poses, b) the positions of the root frames for each arm. 
The objective remains minimizing the maximum traveled distance on $\modeg^T$.

\textbf{ILP Invocation:} In previous work~\cite{yu2016optimal} the underlying ILP model was augmented for a range of time-steps ($T_{\rm min}$ to $T_{\rm max}$) and called repeatedly. 
The current work needs the solution to be obtained ideally in a single ILP call.
We restrict the number of ILP invocations by the following change:
\vspace{-0.1in}
\begin{align*}
\vspace{-0.2in}
    T_{max} =  r\times k. \quad \quad 
    \cost'(e_{i,j}) = time(e_{i,j})\cdot \cost(e_{i,j})
    \vspace{-0.2in}
\end{align*}
where $ time(e_{i,j}) $ returns the time slice corresponding to the source of $e_{i,j}$. Using the reformulated cost $\cost'$, in combination with the weighted edges, we might lose some of the strict optimality guarantees, but this lets us get solutions which a) reason about heuristic costs, and b) try to minimize the number of time steps the robots move (i.e., prefer to lower maximum number of actions).

The solution can be retrieved by pruning the end of the solution where the objects reach their target modes and stay still for the remainder of the time-steps. The results from experiments indicate that the modified \mpp solver remains useful as a heuristic.

% \end{itemize}
 
\section{Integrated Task and Motion Planning}
\label{sec:tamp}
An integrated task and motion planning approach has to simultaneously explore (i) arm configurations $\cspace_1\times\ldots\cspace_r$ for every arm, and (ii) \textit{picks, places, and handoffs} that change object poses for every object.

The underlying search is similar to previous work~\cite{shome2019anytime}, which described a multi-modal integrated task and motion planning algorithm using an underlying $\drrtstar$\cite{shome2019drrt}-like decomposition. It has been shown in previous work~\cite{shome2019drrt,shome2018rearrangement,shome2019anytime} that this search strategy is critical for solving these high-dimensional multi-robot problems. The key differences in the proposed approach are outlined below.\\
        a) the integratation the sampling of transition configurations that achieve neighboring modes inside the online search process. For instance, if a neighboring mode on $\modeg$ involves the arm $\arm_i$ picking up $\object_j$, an IK solver can be invoked to find a set of grasping configurations to add as possible grounded configurations to plan to.\\
b) A multi-body path planning heuristic over the mode-graph is used to bias the high-level search, that determines what arm actions would be preferred. In a lot of problem instances, the search space is so large, and the multi-body coordination so constraining, this heuristic proves crucial to not only solution quality, but also feasibility within a limited time budget. Next we  describe some underlying modules that are assumed to be available to the algorithm.

\textbf{Mode-graph Generator:} There is a module which is aware of the spatial configuration of all the robots and placement regions, and their respective reachability, in order to generate a set of placement regions satisfying Assumption~\ref{ass:objectuniformity}, returning $\modeg$. 

\textbf{\mpp Solver:} An \mpp solver subroutine is assumed to be available (as is described in Section~\ref{sec:ilp}. Given an initial multi-modal state, the module returns the multi-body path planning solution for the objects over the mode graph with capacity constraints, that ideally minimizes the maximum path cost by of object.

\textbf{Transition Sampler:} It is assumed that a transition sampler can generate complete configurations defining manipulation actions like picks and placements involving an object and an arm, or handoffs involving two arms and an object. The sampler should be able to provide any number of these mode transitions, whenever invoked.

\subsection{Algorithm}

Bringing together all the tools described so far, a forward search tree to solve sychronized multi-arm rearrangement (\momaalgo) is outlined in this section. The method constructs a search tree $\tree$ of task space configurations $\State\in\taskspace$, rooted at $\State_{\rm init}$ with the objects at $\ainit$, the objective being to find a continuous sequence of motions that reach a state such that the objects are in their final arrangement $\atarget$.

Each vertex of the tree consists of $\State$ and an unique identifier $\tau$ keeping track of the sequence of manipulation actions that led to the current tree vertex. This identifier is similar to \textit{orbits} described in previous work~\cite{vega2016asymptotically}.
The high-level algorithm is described in Algo~\ref{algo:moma}. The method first builds a mode graph calling the subroutine $\mathtt{build\_mode\_graph}$. As mentioned, an internal counter must keep track of different sequences of actions ($\tau_{\rm count}$). 

During each iteration, a $\mathtt{select\_mode}$ selects a mode $\modes$ and the transition id $\tau$ from the tree. 
A fraction of the invocations, the subroutine is designed to \textit{goal bias} by selecting modes that have made the most progress towards the goal. 
If the selected $\tau$ has never been expanded before, the heuristic function for it would be empty. In such a case, the multi-body motion planning subroutine $\mathtt{\mpp}$ is called to obtain a sequence of usable actions ($\heuristic$) that can guide the arms. The computation of this heuristic follows Theorem~\ref{thm:heuristic}. 
If not invoked before, or a fraction of the expansions, neighboring grounded transition configurations are added. This is tantamount to inspecting the current multimodal state and expanding the set of available \textit{grasping, placement or handoff} configurations.
\begin{wrapfigure}{r}{0.61\textwidth}
\vspace{-0.3in}
\begin{minipage}{0.61\textwidth}
    \begin{algorithm}[H]
  \caption{{\tt \momaalgo}$ (\State_{\rm init}, \atarget) $}
  \label{algo:moma}
  $\Pi\leftarrow\emptyset;\quad  \tau_{count} \leftarrow 0$;\\
  $\modeg \leftarrow \mathtt{ build\_mode\_graph}()$;\\
  $\tree.V  \leftarrow <\State_{\rm init}, \mathtt{increment}(t_{count})>$;\\
  \For{$ max\_iters $}
  {
    $(\modes,\tau)\leftarrow \mathtt{select\_mode}()$;\\
    \If{$\heuristic(\tau) = \emptyset$} 
    {
        $\heuristic(\tau) \leftarrow \mathtt{\mpp}(\modes,\modeinv(\atarget))$;\\
    }
    \If{$\heuristic(\tau) \neq \emptyset$ \textbf{or} $\mathtt{add\_fraction}()$} 
    {
        $\mathtt{sample\_trans}(\mathtt{adj}(\modeg, \mode), t)$;\\
    }

    $\State_{\rm near} \leftarrow \mathtt{select}(\tau,\heuristic)$;\\
    $\State_{\rm new} \leftarrow \mathtt{extend}(\State_{\rm near},\tau,\heuristic)$;\\
    $\State_{\rm best} \leftarrow \mathtt{rewire}(\State_{\rm new})$;\\
    
    \If{ $  \Pi(\State_{\rm best}\rightarrow\State_{\rm new}) \in \tfree $  }
    {
        $\tree.V \leftarrow \tree.V \cup <\State_{\rm new}, \tau>$;\\
        $\tree.E \leftarrow \tree.E \cup $\\$ (<\State_{\rm best}, \tau>, <\State_{\rm new}, \tau>)$;\\
        
        \If{$\mathtt{modal\_check}(\modeinv(\State_{\rm new}),\tau)$}
        {
            $\tau_{\rm new}\leftarrow \mathtt{increment}(\tau_{count})$;\\
            $\tree.V \leftarrow \tree.V \cup <\State_{\rm new}, \tau_{\rm new}>$;\\
            $\tree.E \leftarrow \tree.E \cup $\\$(<\State_{\rm new}, \tau>, <\State_{\rm new}, \tau_{\rm new}>)$;\\
        }
        \If{$\modeinv(\State_{\rm new}) = \modeinv(\atarget)$}
        {
            $\Pi_{new} \leftarrow \mathtt{retrace}(\State_{\rm new})$;\\
            \If{$\cost(\Pi_{new}) < \cost(\Pi)$}
            {
                $\Pi\leftarrow\Pi_{new}$;\\
            }
        }
    }
  }
  $\mathbf{return}\ \Pi$
  \end{algorithm}
  \end{minipage}
  \vspace{-0.3in}
  \end{wrapfigure} The subroutines $\mathtt{select}$,$\mathtt{extend}$ and $\mathtt{rewire}$ follow standard definitions except the restriction that they operate on the set of vertices that have the same id $\tau$. The heuristic is used in $\mathtt{select}$ and $\mathtt{extend}$ to bias towards making progress towards various modal goals. Specific consideration has to be provided for modal guidance that is inherently coupled (handoffs) versus single arm targets (picks or places). 
If the extension is valid and collision-free, and the new node satisfies a transition to an adjacent mode, the search can progress to an adjacent multimodal state through the current transition sequence (tracked by incrementing $\tau$). If the adjacent mode attains the target arrangement, the solution path $\Pi$ is updated if the cost improves.\\
\textit{Note on synchronicity:} In the algorithm, the synchronicity assumption is encoded into the function $\mathtt{modal\_check}$ which checks against the multimodal goals like $\modegoal$ returned by the \mpp solver or fully defined multimodal states defined by the  $\mathtt{sample\_trans}$ subroutine. This restricts the search from adding new \textit{transition id-ed} components to $\tree$ every time partial progress has been made, and enforces the synchronization assumption.

\section{Results}
\label{sec:results}
\vspace{-0.15in}
This section goes over the experimental evaluations performed to demonstrate the effectiveness of the proposed approach. The same underlying task planning framework is used in each experiment. The key metrics we want to measure are \textit{the time it takes to find the initial solution}, \textit{the solution cost returned after 30s}, \textit{the number of actions}, and \textit{success ratio}. All experiments were run on a single core of an { Intel(R) Xeon(R) CPU E5-1660 v3 @ 3.00GHz} processor having a maximum available {16GB of RAM}, and data is reported averaged over 20 trials.

{\bf Comparison Points: }
Three strategies are used as guidance in the task planner.

\noindent\textbf{SMART} (proposed): Algo~\ref{algo:moma} uses the \mpp over the mode graph with capacity constraints to the goal arrangement of objects in the mode graph, and uses this solution as the guidance from the current mode.

\noindent\textbf{Sequential:} The objects are prioritized in an arbitrary way that is fixed per experiment. The \textit{Sequential} heuristic solves the \mpp problem over the mode graph \textit{for the next remaining object}. 
This chains a sequence of single-object solutions.

\noindent\textbf{Greedy:} The \textit{Greedy} heuristic solves the \mpp problem for every object, and guides towards the next mode for \textit{every} object greedily. Any conflicts that might arise from this guidance have to be overcome by the exploration in the underlying task planner.

\begin{figure}
    \centering
    \vspace{-0.2in}
    \includegraphics[height=0.9in]{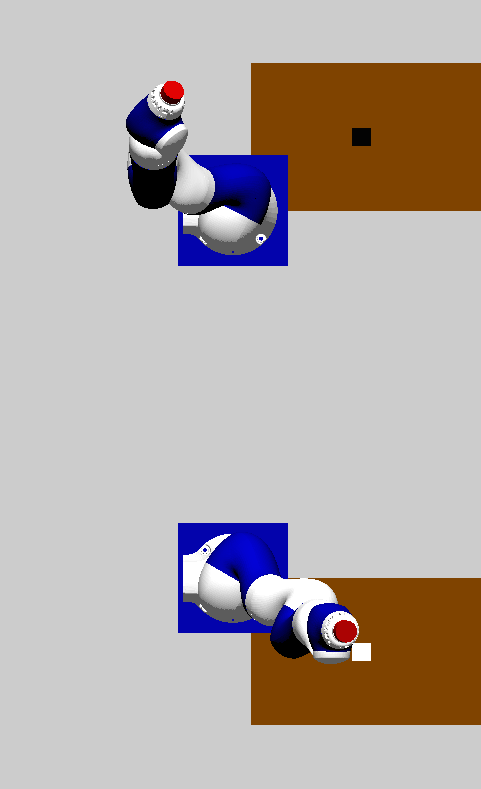}
    \includegraphics[height=0.9in]{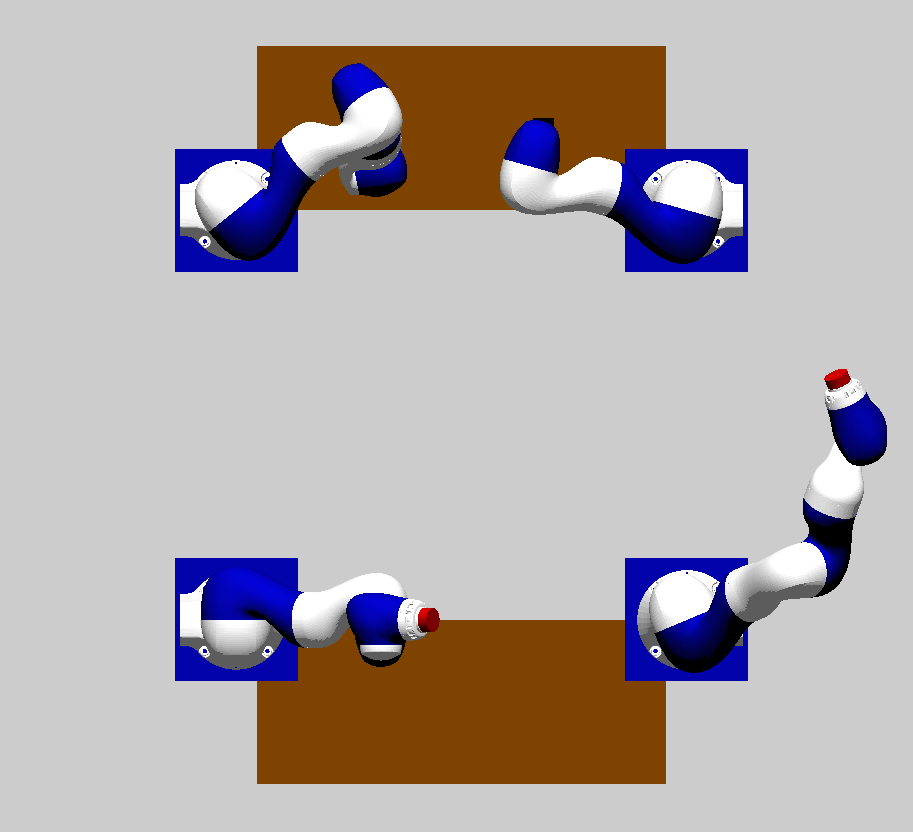}
    \includegraphics[height=0.9in]{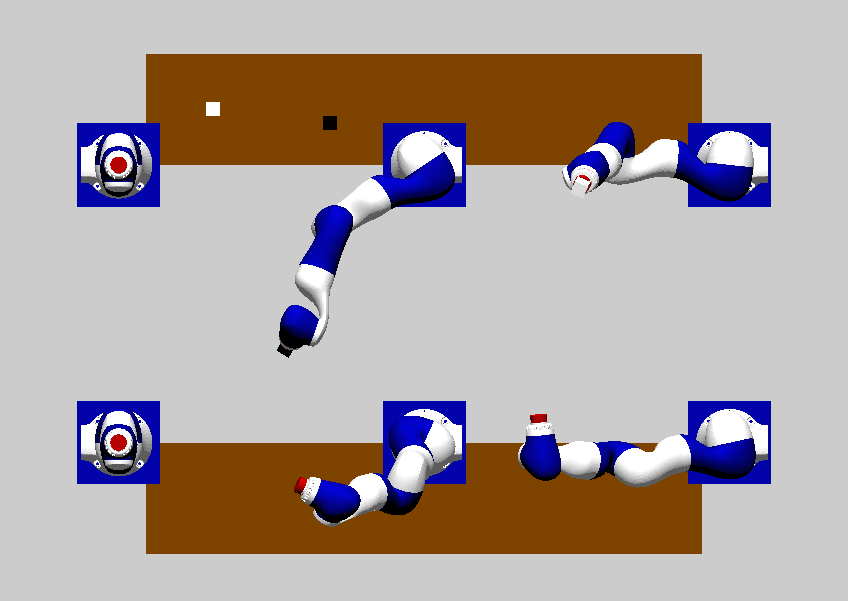}
    \includegraphics[height=0.9in]{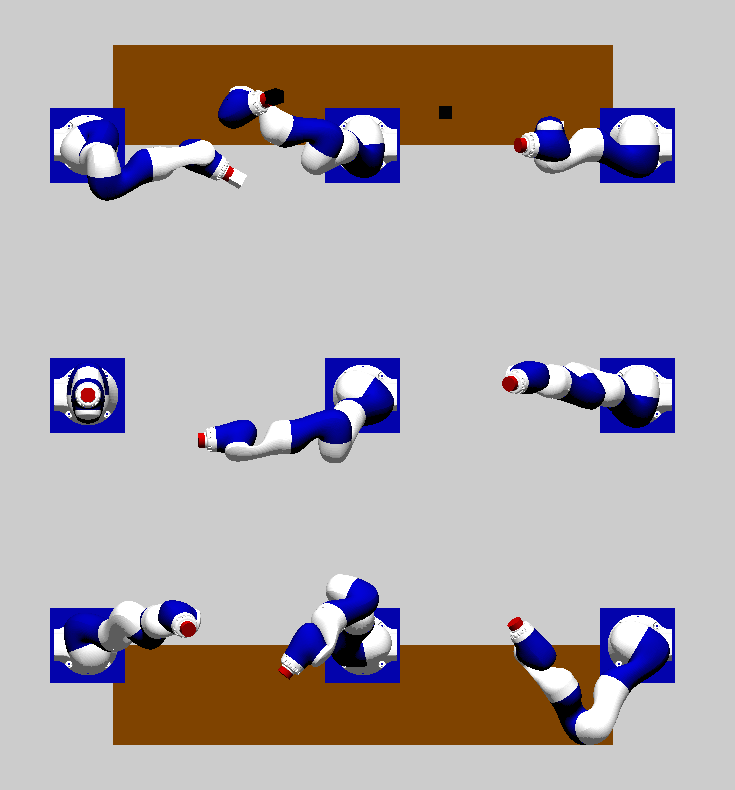}
    \vspace{-0.1in}
    \caption{Benchmarks setups with (\textit{left to right}) two, four, six, and nine arms respectively.}
    \vspace{-0.25in}
    \label{fig:benchmark_setups}
\end{figure}

{\bf Benchmarks: }
The experiments inspect the problem of transferring objects between two tables at two ends of the workspace using a set of $7 \dof$ \textit{Kuka iiwa14} manipulator. 
Fig~\ref{fig:benchmark_setups} outlines the combinations of $r\times k$ as the number of robots and objects used in four different such setups: $2\times 2, 4\times 4, 6\times 4, 9\times 4$.  Note that simply the centralized motion planning problem for $9$ robots lies in a $63$-dimensional space. 

To simplify manipulation the objects are $6cm\times6cm$ cubes with a grasp on each side, and $5$ IK solutions for each (\textit{pick, place, or handoff}), whose computation is included in the reported times. Three of the benchmarks have non-intersecting starts and goals.

\noindent\textbf{Switch:} In this benchmark objects are split into two halves. Each set has initial poses sampled on one of the tables and final poses on the other one. For the other half the transfer direction is reversed. Given the setup, this can cause bottlenecks for \textit{Greedy} which can get stuck if the problem involves poses on opposite placement regions. 

\noindent\textbf{Side-to-side:} In this benchmark all the objects have their initial poses sampled on one of the tables, while the final poses are sampled on the other table. This promotes concurrent transfers using all the arms adjacent to the tables.  

\noindent\textbf{Random:} In this benchmark, for every object the start pose is selected on either table, and the target lies on the opposite table. This in essence is a combination of the other two benchmark with the direction of the transfers randomized.

\vspace{-0.2in}
\subsection{Switch Benchmark}
\vspace{-0.1in}
The data as shown in Fig~\ref{fig:data}(\textit{top}). 
\textit{Greedy} always gets bottlenecked in $2\times 2$. With more manipulators, the \textit{Greedy} success stays low. Note that both \textit{\mgcc} and \textit{Sequential} have similar performance for $2\times 2$ since the solutions should be very similar. As the number of robots increases \textit{\mgcc} the initial times are either comparable or better than \textit{Sequential}, while providing better solutions, and far fewer manipulation actions (since \mgcc minimizes this).  The improvement in the number of actions is not identical to the solution durations since there is an overhead of arm coordination that might exist since {\mgcc} allows an arbitrary number of arms to interact at any time.

\begin{figure}[t]
    \centering
     \includegraphics[width=0.4\textwidth]{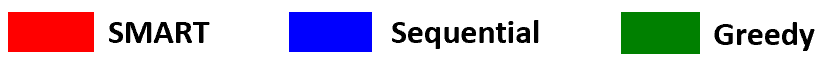}\\
    \includegraphics[width=0.23\textwidth]{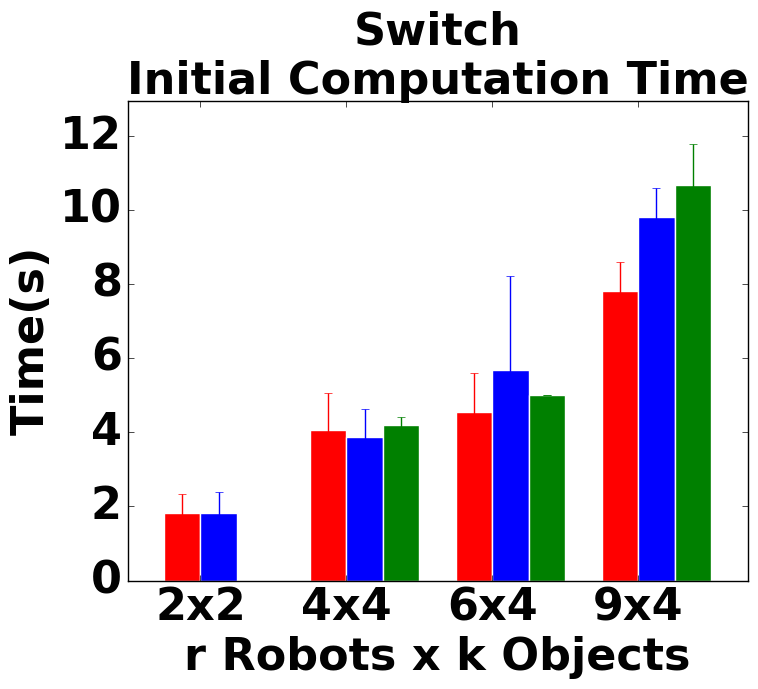}
    \includegraphics[width=0.23\textwidth]{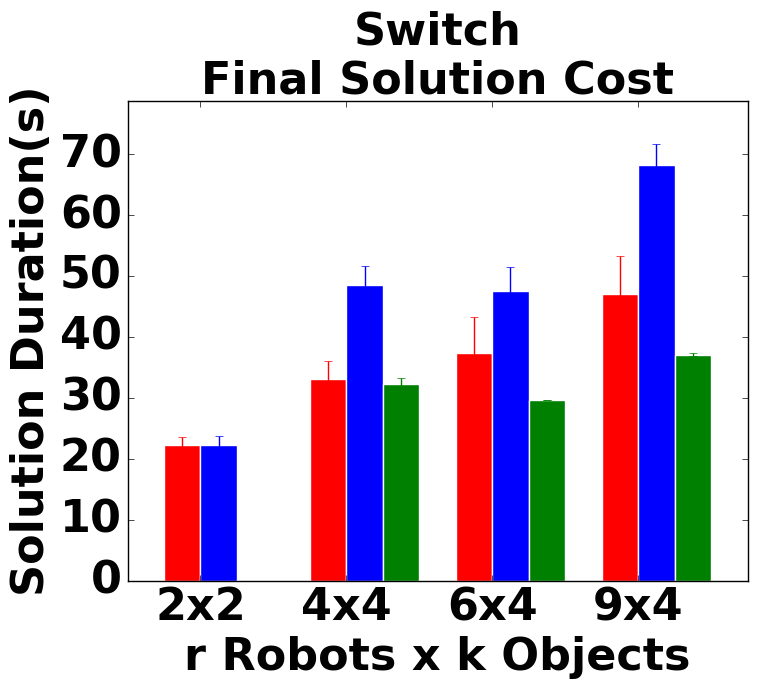}
    \includegraphics[width=0.23\textwidth]{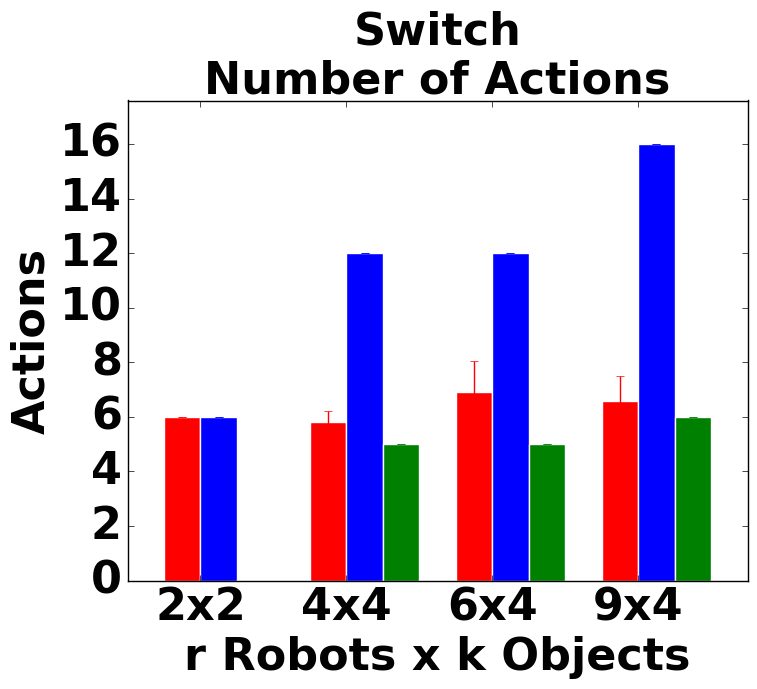}
    \includegraphics[width=0.23\textwidth]{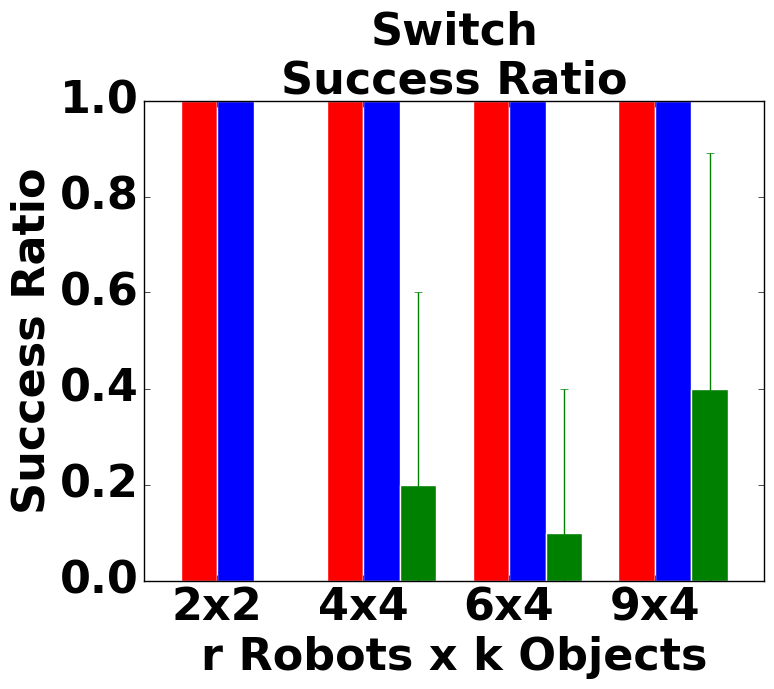}
    \includegraphics[width=0.23\textwidth]{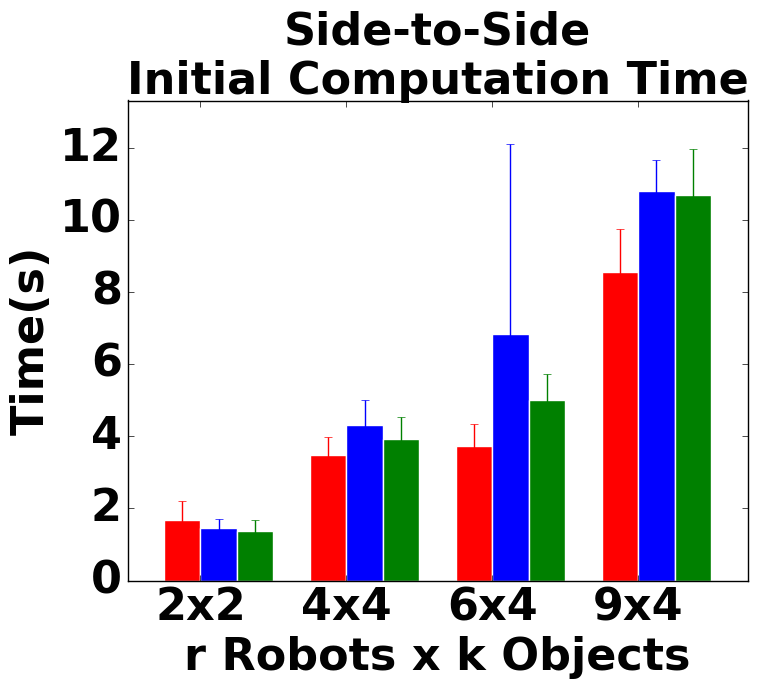}
    \includegraphics[width=0.23\textwidth]{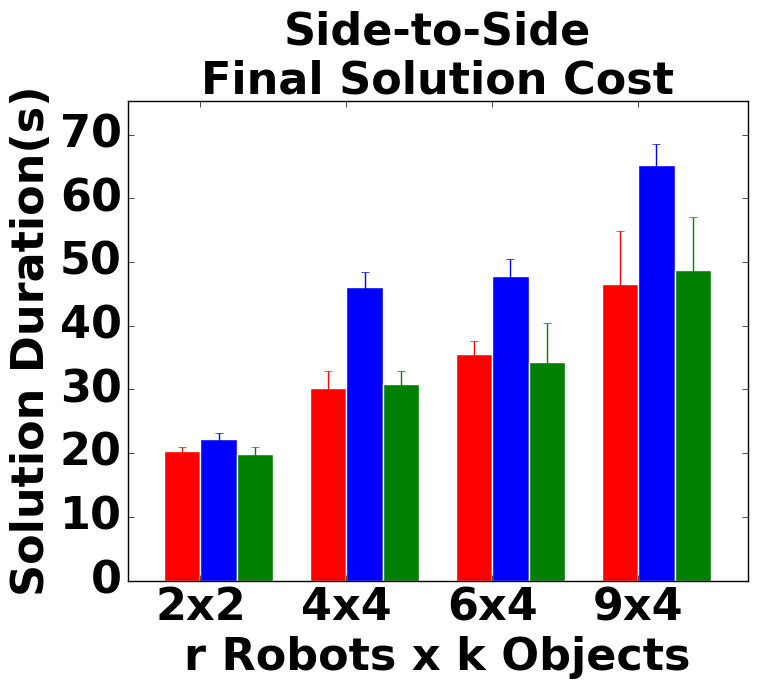}
    \includegraphics[width=0.23\textwidth]{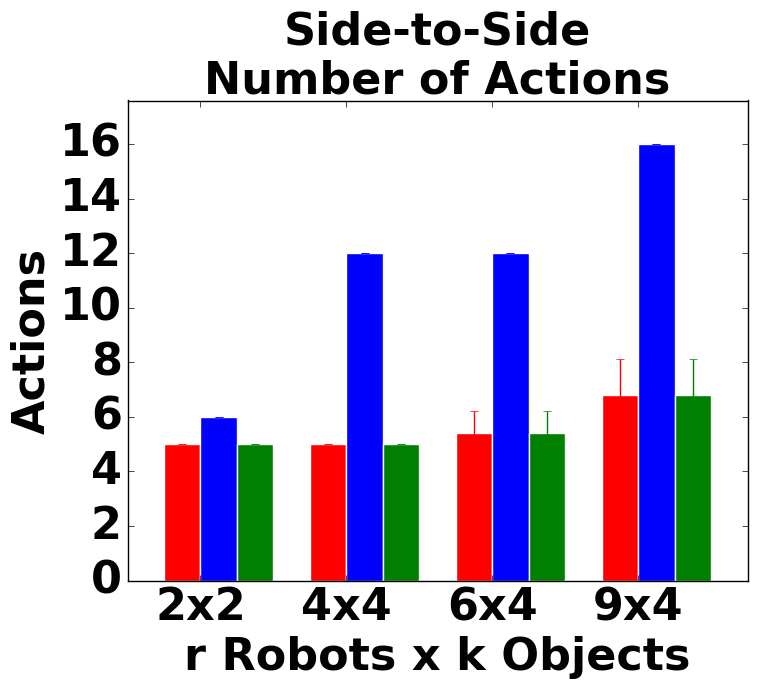}
    \includegraphics[width=0.23\textwidth]{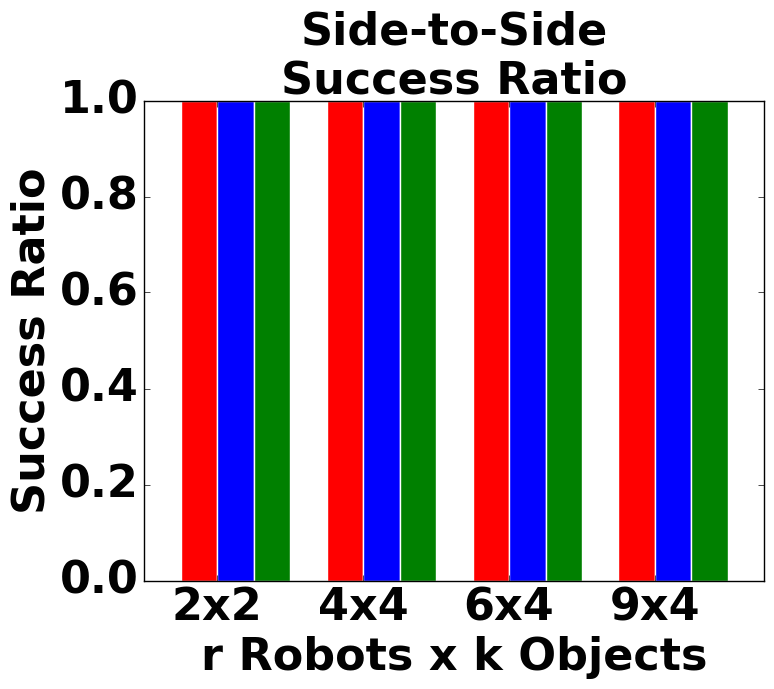}
    \includegraphics[width=0.23\textwidth]{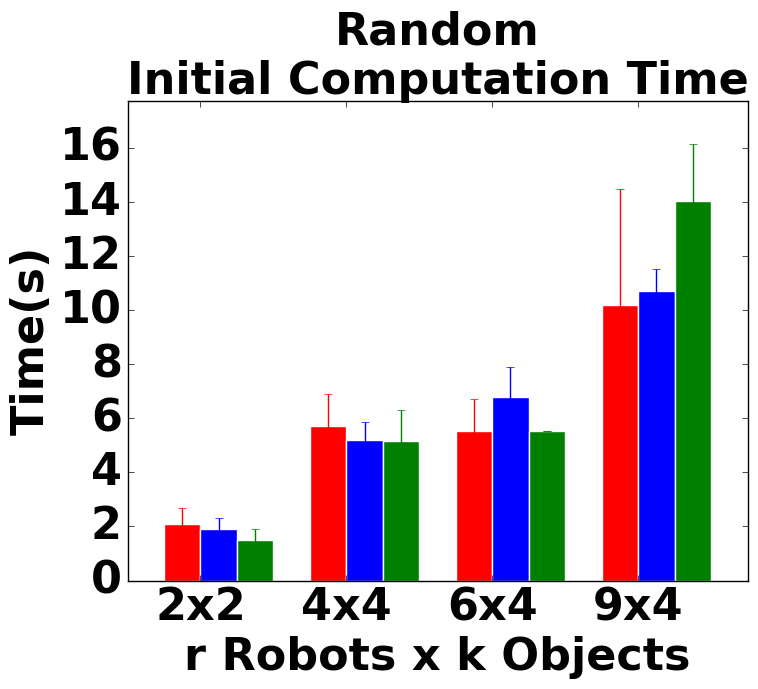}
    \includegraphics[width=0.23\textwidth]{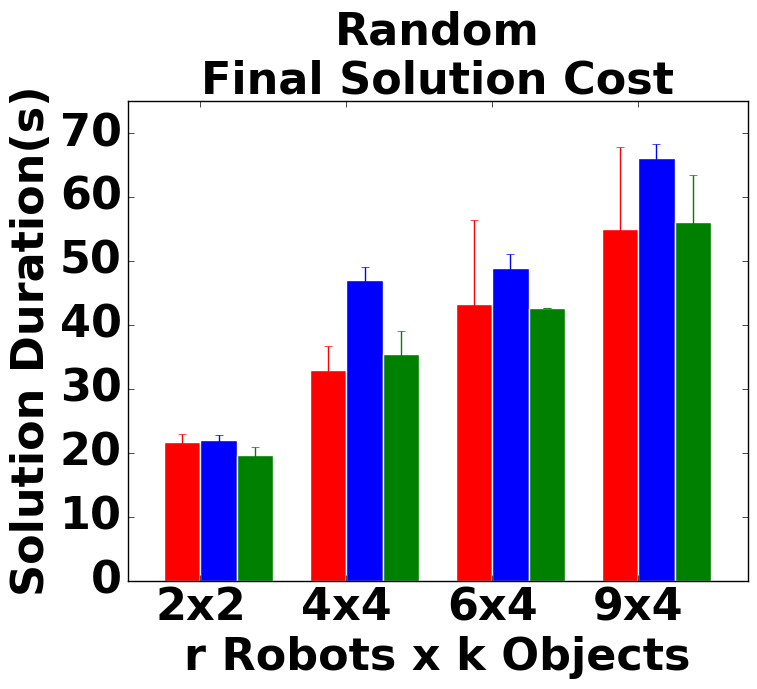}
    \includegraphics[width=0.23\textwidth]{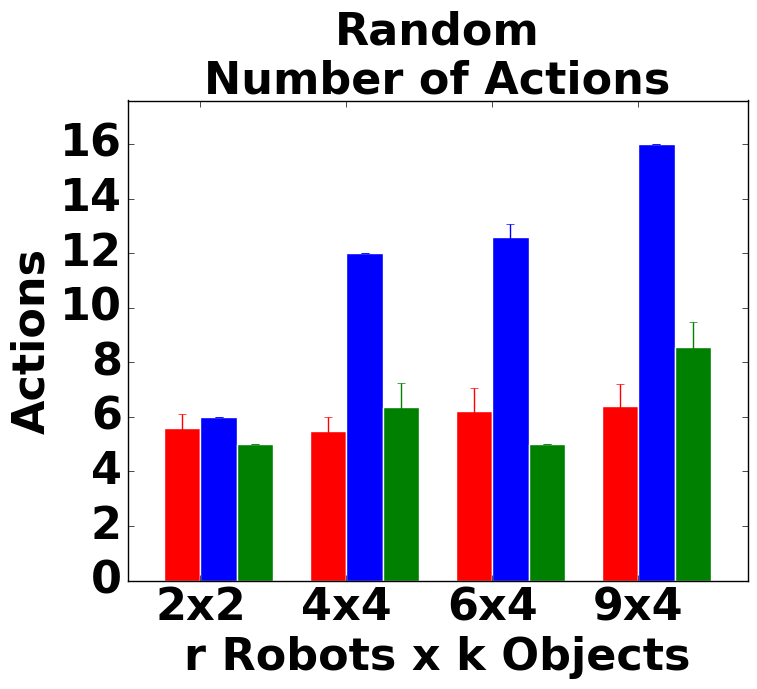}
    \includegraphics[width=0.23\textwidth]{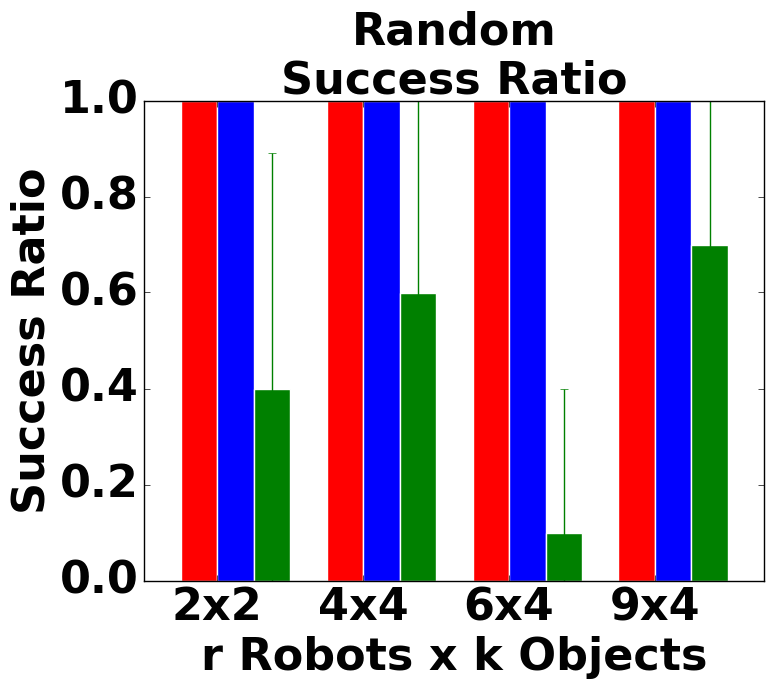}
    \vspace{-0.1in}
    \caption{Benchmarks per row from top to bottom, the switch, side-to-side, and sort benchmark data is reported.  From left to right, (\textit{First:}) shows the initial computation times, (\textit{Second:}) shows the solution duration after 30s of computation, (\textit{Third:}) shows the total number of discrete instants of object transitions or actions in the solution, and (\textit{Fourth:}) the success ratio.}
    \vspace{-0.2in}
    \label{fig:data}
\end{figure}

\vspace{-0.2in}
\subsection{Side-to-side Benchmark}
\vspace{-0.1in}
The data as shown in Fig~\ref{fig:data}(\textit{middle}). All three comparison points succeed in this problem. \textit{\mgcc} needs far fewer actions and a faster initial solution time. The costs are comparable for \textit{\mgcc} and \textit{Greedy}, while being better than \textit{Sequential} solutions. For instance in the $9\times4$ case \textit{\mgcc} has a $20s$ speedup compared to \textit{Sequential}.

\vspace{-0.2in}
\begin{wrapfigure}{r}{0.31\textwidth}
\vspace{-0.35in}
    \centering
    \includegraphics[width=0.31\textwidth]{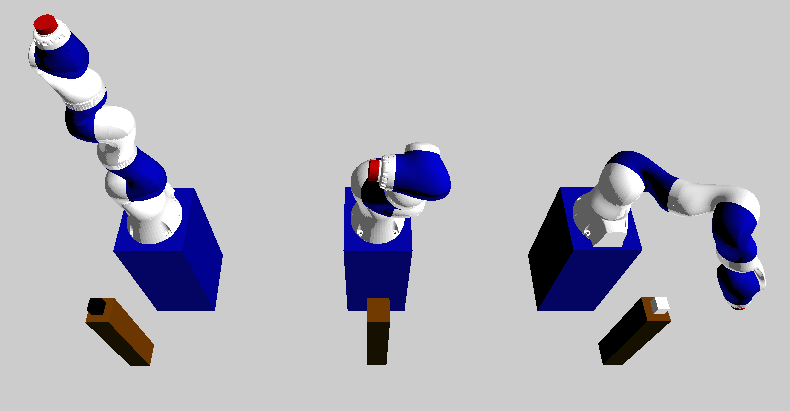}
            \vspace{-0.3in}
    \caption{A non-monotone in-place swap demonstration.}
    \vspace{-0.4in}
    \label{fig:nonmonotone}
\end{wrapfigure}
\subsection{Random Benchmark}
\vspace{-0.1in}
The data is shown in Fig~\ref{fig:data}(\textit{bottom}). In terms of computation times \textit{\mgcc} is similar or better to \textit{Sequential}.  \textit{Greedy} shows poor performance with bottlenecks and fails often. Its solutions are similar to \textit{\mgcc} when it works. As the robots' number increases, its time increases but \textit{\mgcc} manages to solve most problems under $10s$ with fewer manipulation action instants and better solution costs.

\vspace{-0.2in}
\subsection{Non-monotone Demonstration}
\vspace{-0.1in}
\label{dem:nonmonotone}
Fig~\ref{fig:nonmonotone} shows a problem that involves in-place swapping of two objects. The problem neccesitates both the use of a buffer, as well as multiple interactions between an object and arm. This lies in the class of efficiently solvable problems that are non-monotone (and satisfy the object non-interactivity condition). 
On average using \textit{\mgcc} discovered an initial solution in $3.41s$, succeeding every time with a 8 step solution spanning $34.6s$.
\vspace{-0.3in}

\vspace{-0.05in}
\section{Discussion}
\label{sec:discussion}
\vspace{-0.1in}
The current work demonstrates the connection between Synchronized Multi-Arm Rearrangement (\moma) problems and Multi-Agent Path Finding (\mpp). The link
corresponds to an object-centric mode graph with capacity constraints, for which there are efficient solvers. These \mpp solutions are shown to be beneficial as heuristics in large scale \moma problems involving 9 arms and 4 objects, as well as a non-monotone demonstration. There are various aspects of the mode-graph that can be explored in future work. The removal of the object-non interactivity condition to encode other constraints, or dealing with mobile manipulators can lead to efficient solutions to complex non-monotone challenges. Improvements to the task planner can also allow removing the synchronization assumption.\rahul{Increasing the number of objects has a multiplicative effect on the depth of the forward search tree of actions. It is interesting to further study how larger-scale problem instances in term of the number of objects can be solved efficiently.} The current work motivates towards these intriguing avenues of future research.
\vspace{-0.15in} 
{
\vspace{-0.15in}
\bibliographystyle{spmpsci}

}

\section*{Appendix}
\subsection*{Sketch of Properties}

This section provides a sketch of the theoretical properties of the method proposed in the current work. At a high-level, the proposed approach uses an underlying framework for integrated task and motion planning based on previous work~\cite{shome2019anytime,vega2016asymptotically,Hauser2011Randomized-Multi-Modal-}. This framework retains its completeness guarantees for the proposed formulation. A contribution of the current work is to address instances of multi-arm multi-object task planning problems where a heuristic can be computed efficiently and provides useful guidance in the search process. The aforementioned heuristic is obtained by casting the object rearrangement problem to an analogous instance of a multi-agent path finding problem on a graph. Previous work~\cite{yu2016optimal} has studied this problem to demonstrate its correspondence to a multi-commodity network flow problem with its own inherent problem-class complexity.

\subsubsection*{Properties of integrated task and motion planning}
\quad The underlying task and motion planning framework outlined in the Algorithm \momaalgo is closely based on previously described integrated frameworks~\cite{shome2019anytime,vega2016asymptotically,Hauser2011Randomized-Multi-Modal-}, which:
\begin{itemize}
    \item samples \textit{transitions} - e.g, object picks, handoffs, and placements;
    \item builds a forward search tree over possible combinations of \textit{transitions} - object grasps, handoffs, and placements;
    \item and constructs a roadmap to connect each pair of consecutive transitions.
\end{itemize}

Such pipelines has been shown to be probabilistically complete~\cite{Hauser2011Randomized-Multi-Modal-}, and under certain assumptions regarding the transitions and underlying roadmaps, also asymptotically optimal~\cite{vega2016asymptotically}. Specifically, more recent analysis~\cite{shome2020pushing} studies the specific conditions for arguing asymptotic optimality for such task and motion planning algorithms.

One of the modifications when applying this principle to multi-arm problems is the decomposition of the multi-arm roadmaps into constituent roadmaps of each arm, and searching over their \textit{tensor product}~\cite{shome2019drrt}. This operation has also been proven to maintain the completeness and optimality properties of the underlying constituent roadmaps.

Algorithm \momaalgo satisfies the requirements for asymptotic optimality by sampling additional transitions for a fraction of the iterations. The \textit{select} subroutine gives every adjacent transition an opportunity to be selected in the task planning search tree, while goal biasing with the contributed heuristic. The motions between transitions operate over the \textit{tensor} structure defined over all the arms. 

\subsubsection*{Properties of the computed heuristic}
\quad The heuristic utilized to bias exploitation of \textit{actions} or \textit{transition sequences} is derived from solving a simpler variant of the problem that involves only the objects, and considers picks, handoffs and placement actions. The current work shows that this can be cast as a multi-agent path finding problem by treating the objects as agents, and their motions over a discrete graph structure defined in the current work as the capacity constrained object-centric mode graph. The current work also proves that this heuristic is action-optimal, and motion planning for the arms on top of these actions cannot decrease the number of actions involved in the solution. This renders the \mpp heuristic \textit{admissible}. 

In terms of the properties of the corresponding \mpp problem, previous work~\cite{yu2016optimal} has provided an ILP formulation that achieves \textit{complete} and efficient solutions to the NP-Hard problem instance. The current work adds capacity constraints to the formulation to make the solutions amenable to the task planning domain. It is straightforward to expand upon the original analysis for unit capacities at the vertices to argue similar complexity results. Other lines of work ~\cite{surynek2019multi} have studied capacity constraints from the point of view of SAT solvers and arrived at similar arguments. These earlier efforts guided the solution proposed in the current paper, while adhering to a linear programming framework~\cite{yu2016optimal}.

\end{document}